\let\proof\relax
\let\endproof\relax
\algrenewcommand\algorithmicindent{0.75em}
\newcommand{\ignore}[1]{}
 \def\C{\mathcal{C}} 
  \def\E{\mathcal{E}}
 \def\T{\mathcal{T}} 
 \def\V{\mathcal{V}} 
 \def\W{\mathcal{W}} 
\def\M{\mathcal{M}} \def\X{\mathcal{X}} \def\A{\mathcal{A}}
\def\x{\mathbf{x}}
\def\SE{\mathcal{SE}}
\def\SO{\mathcal{SO}}
\newcommand{\Cpp}{C\raise.08ex\hbox{\tt ++}\xspace}
\newtheorem{lem}{Lemma}
\newtheorem{thm}{Theorem}
\theoremstyle{definition}
\newtheorem{dft}{Definition}
\newtheorem{prob}{Problem}
\newenvironment{proofsketch}{%
  \proof}{\endproof}
\newcommand\algname[1]{\textsf{#1}\xspace}
\newcommand\astar{\algname{A*}}
\newcommand\rrt{\algname{RRT}}
\newcommand\rrts{\algname{RRTs}}
\newcommand\rrtstar{\algname{RRT*}}
\newcommand\rcs{\algname{RCS}}
\newcommand\ros{\algname{RCS*}}
\newcommand\rosnr{\algname{RCS*\_NP}}
\newcommand\aft{\algname{AFT}}
\newcommand\aorrt{\algname{AO-RRT}}
\colorlet{pink}{red!40}
\colorlet{light_blue}{cyan!60}
\newcommand{\inspectionType}[2]
{\ifthenelse{\boolean{POI} }{{#1}}{}\ifthenelse{\boolean{ROI}}{#2}{}}
\title{\LARGE \bf
Resolution-Optimal Motion Planning for Steerable Needles
}
\author{Mengyu Fu$^{1}$,%
\thanks{
This project was supported in part
by the United States National Institutes of Health (NIH) under award R01EB024864;
the United States National Science Foundation (NSF) under awards 2008475 and 2038855; 
the Israeli Ministry of Science, Technology and Space (MOST) under awards 3-17385 and 3-16079; 
and 
the United States-Israel Binational Science Foundation (BSF) under award 2019703.
}
\thanks{
Code is available at~\cite{Fu2022_GitHub}.
}
\thanks{$^{1}$M.\ Fu and R.\ Alterovitz are with the Department of Computer Science, University of North Carolina at Chapel Hill, Chapel Hill, NC 27599, USA.
        {\tt\footnotesize \{mfu,ron\}@cs.unc.edu}}
Kiril Solovey$^{2}$,%
\thanks{$^{2}$K.\ Solovey and O.\ Salzman are with Computer Science Department, Technion - Israel Institute of Technology, Israel.
        {\tt\footnotesize kirilsol@stanford.edu, osalzman@cs.technion.ac.il}}
Oren Salzman$^{2}$, %
and Ron Alterovitz$^{1}$%
}
\begin{document}

\maketitle
\thispagestyle{empty}
\pagestyle{empty}

\begin{abstract}

Medical steerable needles can follow 3D curvilinear trajectories inside body tissue, enabling them to move around critical anatomical structures and precisely reach clinically significant targets in a minimally invasive way.
Automating needle steering, with motion planning as a key component, has the potential to maximize the accuracy, precision, speed, and safety of steerable needle procedures.
In this paper, we introduce the first resolution-optimal motion planner for steerable needles that offers excellent practical performance in terms of runtime while simultaneously providing strong theoretical guarantees on completeness and the global optimality of the motion plan in finite time.
Compared to state-of-the-art steerable needle motion planners, simulation experiments on realistic scenarios of lung biopsy demonstrate that our proposed planner is faster in generating higher-quality plans while incorporating clinically relevant cost functions.
This indicates that the theoretical guarantees of the proposed planner have a practical impact on the motion plan quality, which is valuable for computing motion plans that minimize patient trauma.

\end{abstract}

\section{Introduction}
\label{sec:intro}

\begin{figure}
    \centering
    \includegraphics[width=\linewidth]{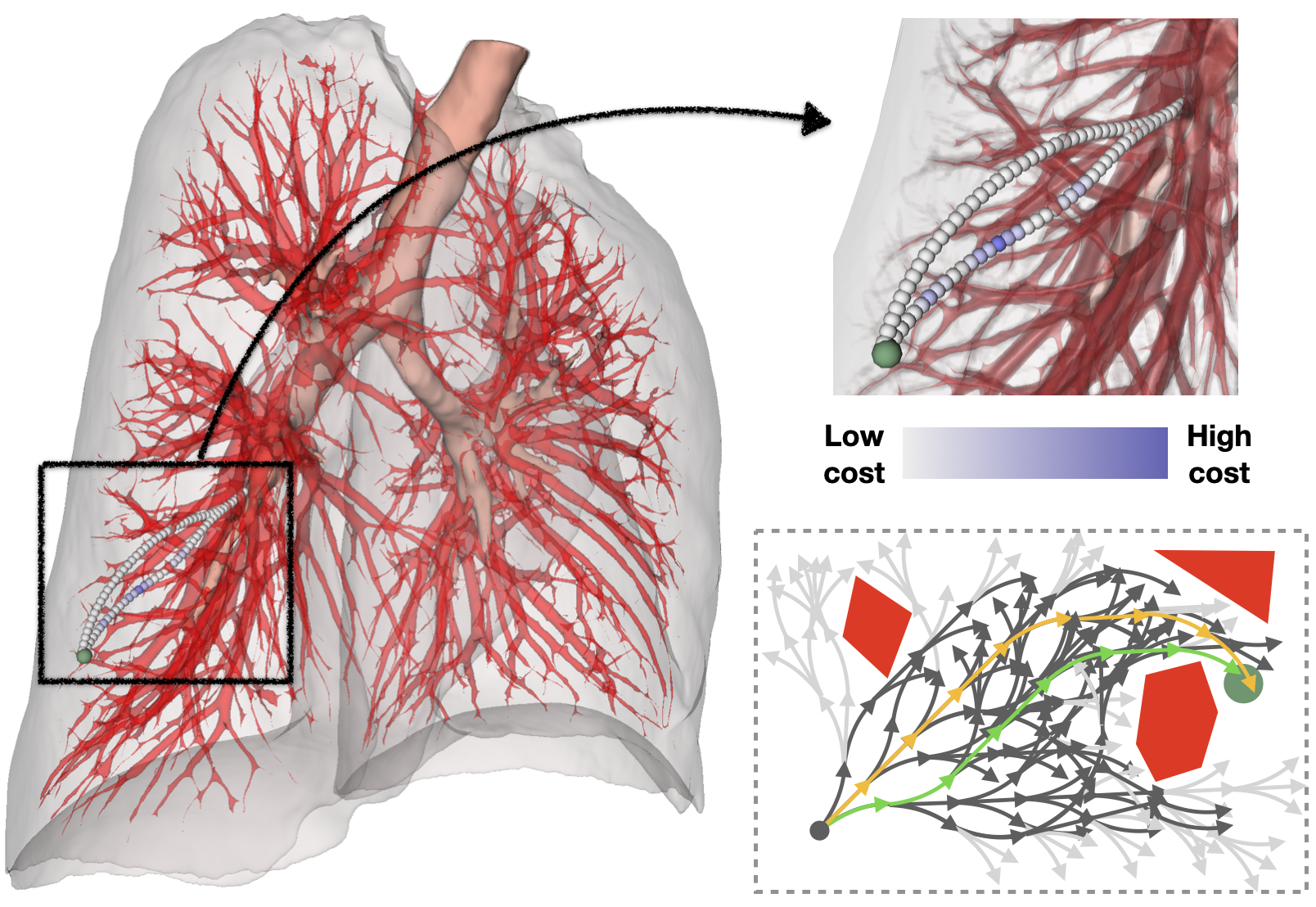}
    \caption{
    \textbf{Left:} 
    Overview of two different steerable-needle motion plans, both reaching a nodule (green) in the lung parenchyma for biopsy or cancer treatment while avoiding critical anatomical structures such as the bronchial tubes (brown) and major blood vessels (red).
    \textbf{Top right:}
    A zoomed-in view of two different plans where small blood vessels are rendered in grayscale.
    We use the method in~\cite{Fu2018_IROS} to reconstruct a cost map that represents the risk of puncturing small blood vessels.
    Different colors along the plans show different costs on the cost map.
    The top plan is computed with our proposed planner, \ros.
    The bottom plan is computed with our previous \rcs algorithm~\cite{Fu2021_RSS} and has a higher cost.
    \textbf{Bottom right:}
    A 2D illustration of the tree grown using our resolution-optimal motion planner towards a goal region (green) while avoiding obstacles (red).
    The best (shortest) plan found is shown in light green, and another valid but worse (longer) plan is shown in yellow.
    With a combination of generic and domain-specific optimizations, we can greatly shrink the search space (i.e., discard the light gray edges).
    }
    \label{fig:cover}
\end{figure}

Medical steerable needles have the potential to improve patient care in diagnostic and therapeutic procedures including biopsy, localized drug delivery, and radioactive seed implantation for cancer treatment~\cite{Abolhassani2007_MEP}.
Steerable needles have a small diameter and are made of a highly flexible material, which allows them to follow 3D curvilinear trajectories inside the tissue. 
These properties enable steerable needles to move around critical anatomical structures to reduce patient trauma and reach sites previously unreachable with traditional straight needles~\cite{Alterovitz2005_ICRA,Cowan2011_Chapter,Park2005_ICRA,Webster2006_IJRR}.

Automating needle steering can improve the accuracy, precision, speed, and safety of steerable-needle procedures. 
Automating these procedures can also facilitate their broad use, since manual control of a steerable needle is challenging due to the nonholonomic constraints on the needle's motion and the high level of precision required to operate it. 
A key component of automating steerable needle procedures is motion planning: computing feasible, obstacle-avoiding trajectories through the tissue to reach a target. 
The trajectory of the needle through tissue should also maximize patient safety, which can be quantified using metrics such as minimizing trajectory length~\cite{Favaro2018_ICRA}, maximizing clearance from obstacles~\cite{Wein2008_IJRR, Agarwal2018_TALG, Kuntz2015_IROS, Strub2021_arXiv}, and minimizing damage to sensitive tissue~\cite{Fu2018_IROS,BRRSK21}.
An example is shown in Fig.~\ref{fig:cover}.

A motion planner for steerable needles should ideally offer guarantees on both finite-time completeness (i.e., compute in finite time a motion plan or indicate that none exists) and optimality (i.e., return a globally optimal motion plan with respect to a chosen cost metric).
Most prior motion planners for steerable needles lack one or both of these criteria.
For instance, some methods for steerable needle motion planning lack completeness guarantees~\cite{Duindam2010_IJRR, Favaro2018_ICRA, Hauser2009_RSS, Patil2014_TRO, Seiler2012_IJRR,Van2010_WAFR,Liu2016_RAL}, and so may fail to find a motion plan when one exists.
Some methods do aim to optimize motion plan cost but they lack \emph{global} optimality guarantees~\cite{Liu2016_RAL,Pinzi2019_IJCARS,Favaro2018_ICRA}.
 
Some sampling-based planners are known to be both complete and optimal, albeit those properties are usually proven only for an asymptotic regime where the number of samples tends to infinity~\cite{Lavalle.Kuffner.2001,Kleinbort2018_RAL,Hauser2016_TRO,Kleinbort2020_ICRA,Karaman2011_IJRR,SoloveyEA2020,Li2016_IJRR,Sun2015_TRO,SH15a}.
Thus, it is unclear what should be the number of samples necessary to achieve those guarantees in practice.
Recent work has developed optimality guarantees for finite sampling, although those results cannot be currently applied to steerable needles as they deal with holonomic systems~\cite{Tsao.ea.2020,DayanSoloveyETAL2021}.

In this paper, we introduce the first motion planner for steerable needles that offers excellent practical performance in terms of runtime while providing strong theoretical guarantees on completeness and the cost of the motion plan in \emph{finite time}. 
In particular,  we consider a specific type of optimality in relation to the motion plan cost---resolution optimality.
Generally speaking, a resolution characterizes the discretization of some space (e.g., state space, configuration space, action space, and time). An algorithm is \textit{resolution complete} if there exists a fine-enough resolution with which the algorithm finds a motion plan in finite time when a qualified motion plan exists, and otherwise correctly returns that no such plan exists~\cite{Fu2021_RSS}.
An algorithm is \textit{resolution optimal} if it is resolution complete and if, when it does return a motion plan, the plan’s cost is guaranteed to be within a desired approximation factor of the cost of a globally optimal qualified motion plan.

Our new motion planner builds on Resolution-Complete Search (\rcs)~\cite{Fu2021_RSS}, which is a resolution-complete but not resolution-optimal motion planner for steerable needles.
If a motion plan exists, \rcs would find a motion plan in finite time assuming that the parameter resolution is fine enough, but it provides no guarantees on the motion plan cost.
To achieve resolution optimality, we enhance \rcs, which explores the needle's state space in an \astar-like fashion, with cost-aware duplicate pruning
while incorporating motion plan cost tracking and a heuristic function to improve efficiency.
We also provide a proof sketch to show the resolution optimality of our method with a careful discussion of assumptions and required conditions.
We refer to our new method as \ros, a resolution-optimal version of \rcs.
We also demonstrate experimentally on a realistic lung biopsy scenario that \ros outperforms the state-of-the-art in terms of runtime and plan quality.

\section{Related Work}
\label{sec:related_work}

\subsection{Motion planning for steerable needles}
A variety of approaches have been proposed for the motion planning of steerable needles. 
Duindam et al.~\cite{Duindam2010_IJRR} proposed a planner based on inverse kinematics but provided no theoretical guarantees.
Liu et al.~\cite{Liu2016_RAL} developed the Adaptive Fractal Tree (\aft) for needle steering.
Their method iteratively refines the lowest-cost plan from the previous iteration, but refining the best plan of a coarse resolution does not necessarily lead to the best plan in a finer resolution.
Pinzi et al.~\cite{Pinzi2019_IJCARS} extended it to account for goal orientation constraints.

Some planners adapt sampling-based methods such as
Rapidly-exploring Random Tree (\rrt)~\cite{Lavalle.Kuffner.2001} for steerable needles.
Xu et al.~\cite{Xu2008_ICASE} used an \rrt variant for needle steering but showed low time efficiency.
Patil et al.~\cite{Patil2014_TRO} developed an \rrt-based needle planner that samples in the 3D workspace rather than the configuration space.
Sampling in a lower-dimension space and their customized distance function made the planner work efficiently in many practical cases, but this also invalidates the probabilistic-completeness guarantee of \rrt~\cite{Lavalle.Kuffner.2001,Kleinbort2020_ICRA}.
To avoid dealing with curvature constraints directly, Favaro et al.~\cite{Favaro2018_ICRA} proposed a hybrid method to combine sampling and smoothing.
First, a tree embedded in the 3D workspace is built with \rrtstar~\cite{Karaman2011_IJRR}, then candidate plans found by the tree are smoothed to further account for the curvature constraint.
However, such a decoupling invalidates the asymptotic optimality guarantee~\cite{Karaman2011_IJRR,SoloveyEA2020}.
Sun et al.~\cite{Sun2015_TRO} proposed a needle planner by building multiple \rrts, which is asymptotically optimal when the number of trees tends to infinity.
Other methods focus on accounting for the uncertainty during needle insertion without providing formal guarantees~\cite{Hauser2009_RSS,Van2010_WAFR,Seiler2012_IJRR}.

\subsection{Resolution-optimal motion planners}
Although resolution completeness has been frequently mentioned~\cite{Barraquand1991_IJRR,Barraquand1993_Algorithmica,Cheng2002_ICRA, Lindemann2006_ICRA,Yershov2010_WAFR}, resolution optimality earned little attention, possibly due to being rather complex to analyze mathematically, particularly for nonholonomic systems.
Consequently, many planners developed for nonholonomic systems focus on asymptotic optimality instead~\cite{Gammell2021_AR,Hauser2016_TRO,Li2016_IJRR,Shome2021_ICRA}.

Barraquand et al.~\cite{Barraquand1993_Algorithmica} proposed a resolution-complete planner for single or multiple robots with nonholonomic constraints.
Their method is also optimal with respect to the number of reverse maneuvers in the plan. 
Pivtoraiko et al.~\cite{Pivtoraiko2009_JFR} proposed the idea of motion planning using state lattices for field robots.
Their state lattices planner is resolution optimal since the search is optimal for a graph of some resolution and the discrete state grid approximates the continuous space as resolution increases.
Ljungqvist et al.~\cite{Ljungqvist2017_IVS} later extended~\cite{Pivtoraiko2009_JFR} for a general two-trailer system in 2D.
However, these methods are designed for large-scale workspaces, making them unsuitable for tasks where a high level of precision is required, such as for steerable needles.

\section{Problem Definition}
\label{sec:pdef}

We consider bevel-tip flexible steerable needles~\cite{Alterovitz2005_ICRA,Cowan2011_Chapter,Park2005_ICRA,Webster2006_IJRR} controlled by insertion and rotation at their base.
A bevel-tip steerable needle is made of a highly flexible material such that, when being inserted through tissue, asymmetric forces applied by the bevel cause the needle to take a curved trajectory.
The maximum curvature of the needle's path in the tissue, $\kappa_{\max}$, is influenced by the mechanical design of the needle and the tissue that the needle moves through.
Additionally, axially rotating the needle at its base changes the direction the bevel is 
\begin{wrapfigure}{r}{0.25\textwidth}
\vspace{-4mm}
  \begin{center}
    \includegraphics[width=0.23\textwidth]{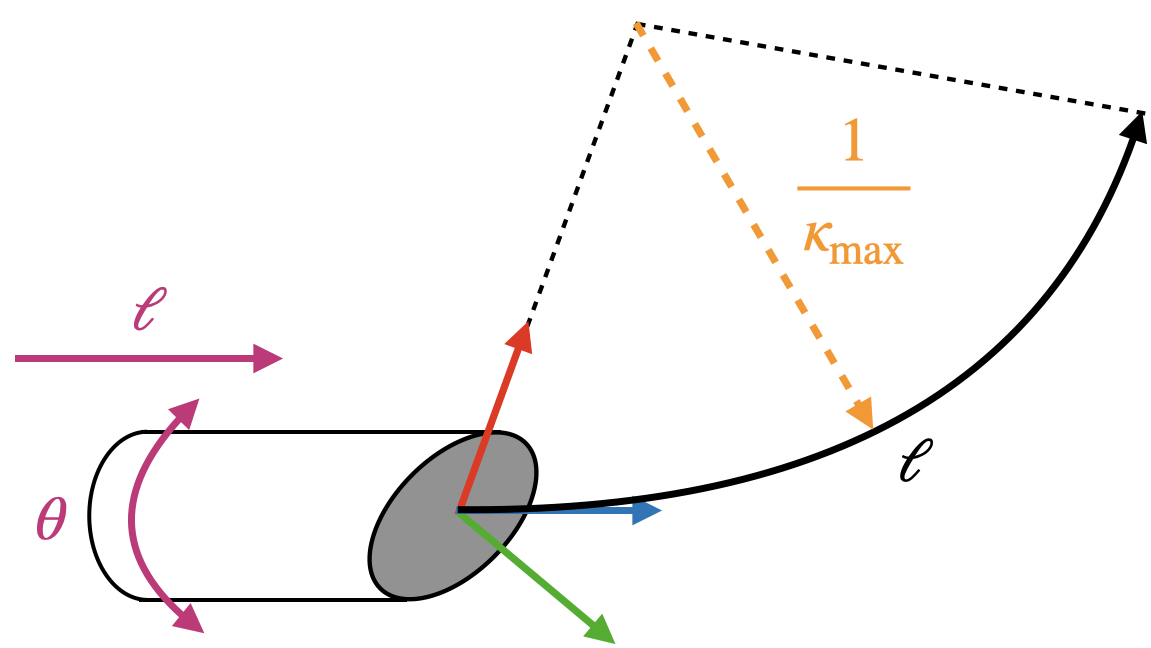}
  \end{center}
\label{fig:needle-control}
\vspace{-2mm}
\end{wrapfigure}
facing, enabling us to control the steering direction.
The figure to the right illustrates the needle's kinematics, where the needle can be inserted by length~$\ell$ and axially rotated at its base  by~$\theta$.

We make the common assumption that the steerable needle is sufficiently flexible so the needle shaft moves along the trajectory created by its tip while the lateral motions are negligible.
Thus, the configuration space of the steerable needle is defined by the pose of its tip, $\X \subset \SE(3)$, where a given configuration $\mathbf{x}=(p, q)\in \X$, specifies the position and orientation components $p\in \mathbb{R}^3$ and $q\in \SO(3)$, respectively. We assume that $\X$ is compact.
We denote the 3D workspace by $\W \in \mathbb{R}^3$, a subset of which is occupied by obstacles~$\W_{\rm obs} \subset \W$.
A configuration~$\mathbf{x} = (p, q)$ is \textit{collision free} if and only if
$p \not \in \W_{\rm obs}$.
We define $\X_{\rm free}$ as the union of all collision-free configurations.

A motion plan of the needle is a trajectory
$\sigma: [0, \ell] \rightarrow \X$,
where $\ell$ is the length of the trajectory. We also use the notation $\ell_\sigma$ to denote the length of $\sigma$. 
A motion plan (or trajectory) $\sigma$ is \textit{collision free} if
$\forall s \in [0, \ell], \sigma(s) \in \X_{\rm free}$.
To evaluate the quality of a motion plan, we consider a configuration-based cost function
$c: \X \rightarrow \mathbb{R}$.
We require $c$  to be well behaved (formally defined in Sec.~\ref{sec:theory}), which includes being Lipshitz continuous and bounded within $[c_{\min}, c_{\max}]$.
We define the cost of a motion plan as the integral of the configuration-based cost along a given trajectory $\sigma$, i.e., 
\begin{equation}
    \C(\sigma) = \int_{0}^{\ell} {c\big(\sigma(s)\big)\mathrm{d}s}.
\label{eq:cost}
\end{equation}
This definition captures a variety of cost functions, including trajectory length and integrating over a cost map derived from medical images.

A motion plan is (kinematically) \textit{feasible} if the curvatures along the trajectory never exceed~$\kappa_{\max}$. A motion plan is \textit{valid} if it is collision free and feasible.
We are now ready to state the steerable needle motion planning problem.

\begin{prob}
\label{prob:MP}
An optimal steerable needle motion planning problem is defined as the tuple
$\Delta = (\X, \W_{\rm obs}, \mathbf{x}_{\rm start}, p_{\rm goal}, \tau, \ell_{\max}, \kappa_{\max}, \C)$,
where~$\W_{\rm obs}$ is the obstacle set,
$\mathbf{x}_{\rm start}$ is the start configuration,
$p_{\rm goal} \in \W$ is the goal point,
$\tau > 0 $ is the goal tolerance,
$\ell_{\max}$ is the maximum insertion length,
$\kappa_{\max}$ is the maximum curvature,
and $\C$ is a cost function.
The problem calls for computing an optimal valid motion plan
$\sigma^* = {\rm argmin}_{\sigma} \C(\sigma)$
subject to:
\begin{enumerate}
    \item [] $\sigma$ is valid,
    \item [] $\sigma(0) = \mathbf{x}_{\rm start}$,
    \item [] $\ell_\sigma \leq \ell_{\max}$,
    \item [] $\|{\rm Proj}(\sigma(\ell_\sigma)) - p_{\rm goal}\|_2 \leq \tau$,
\end{enumerate}
where ${\rm Proj}(\mathbf{x}) = p$ for $\mathbf{x} = (p, q)$.
\end{prob}

As we show in Sec.~\ref{sec:theory}, for any given instance of Problem~\ref{prob:MP}, under some mild assumptions, there exists a fine-enough cutoff resolution~$R_{\min} = \{\delta\ell_{\min}, \delta\theta_{\min}\}$ 
(corresponding to the needle's insertion and axial rotation, respectively)
for which our planner is guaranteed to return a motion plan with
a cost to be within a desired approximation factor of a globally optimal qualified motion plan in finite time (if any qualified motion plan exists), or indicate that no qualified motion plan exists.
Similar to~\cite{Fu2021_RSS}, we assume there exist minimal motions that are precisely achievable by the hardware system in tissues, informing the cuttoff resolution.
In our specific case, the minimal motions are the minimal insertion and minimal rotation the needle tip can precisely perform, determined by a lower-level needle controller.

\section{The \ros algorithm}
\label{sec:method}

We describe our \ros algorithm for resolution-optimal motion planning. We also highlight the differences between our new algorithm and our previous method \rcs~\cite{Fu2021_RSS}. After presenting \ros, we discuss an additional procedure to further improve its performance.

\subsection{Algorithm description}
The core idea of \ros (and \rcs) is to build a search tree with predefined motions of multiple resolutions.
Specifically, \ros constructs a search tree $\T = (\V, \E)$ embedded in the configuration space, where each node $v \in \V$ is associated with a configuration $\mathbf{x}_v \in \X$ and each edge $e = (u, v) \in \E$ represents a transition from $\mathbf{x}_u$ to $\mathbf{x}_v$.
The search tree is rooted at the given start configuration $\mathbf{x}_{\rm start}$ and explores valid motion plans when expanded in the configuration space. 
See the pseudocode of \ros in Alg.~\ref{alg:shared}.

\begin{algorithm}[t!]
\caption{\ros\\
         \textbf{Input:} $\W_{\rm obs}, 
                          \mathbf{x}_{\rm start}, 
                          p_{\rm goal}, 
                          \tau,
                          \kappa_{\max},
                          \ell_{\max}, 
                          \delta\ell_{\max},
                          R_{\min}$
        }
    \begin{algorithmic}[1]
        \State{$\Theta \leftarrow \{0, \frac{\pi}{2}, \pi, \frac{3\pi}{2}\}, K \leftarrow \{0, \kappa_{\max}\}$}
        \label{line:init_coarse}
        \State{root $\leftarrow (\mathbf{x}_{\rm start}, 0, 0)$}
        \Comment{{\footnotesize The root has rank 0 and cost 0}}
        \State{OPEN $\leftarrow \{$root$\}$, CLOSED $\leftarrow \emptyset$, bestPlan $\leftarrow$ NULL}
        \label{line:init_sets}
\vspace{1mm}
        \While{not OPEN.empty()}
        \label{line:termination}
            \State{$v \leftarrow$ OPEN.extract()}
            \label{line:extract}
\vspace{1mm}
            \If{Valid($v, \W_{\rm obs}, p_{\rm goal}, \ell_{\max}$)}
            \label{line:valid}
                \If{\textbf{not} CLOSED.existDuplicate($v$)}
                \label{line:similar-node}
                    \If{GoalReached($v, p_{\rm goal}, \tau$)}
                    \label{line:goal-reached}
                        \State{bestPlan.update($v$)}
                        \label{line:update-best}
                    \EndIf
\vspace{1mm} 
                    \For{$\M \in$ Primitives($K, \delta\ell_{\max}, \Theta$)}
                    \label{line:expand}
                        \State{OPEN.insert($v \oplus \M$)}
                        \label{line:expand_1}
                    \EndFor
                    \State{CLOSED.insert($v$)}
                    \label{line:add_to_closed}
                \EndIf
            \EndIf
\vspace{1mm} 
            \If{$v$ != root}
            \label{line:check-root}
                \For{$\M \in$ RefinedPrimitives($\M_v$)}
                \label{line:refine-primitive}
                    \If{ValidResolution($\M, R_{\min}$)}
                    \label{line:cutoff}
                        \State{OPEN.insert($v.{\rm parent} \oplus \M$)}
                    \label{line:finer_nodes}
                    \EndIf
                \EndFor
            \EndIf
        \EndWhile
\vspace{1mm} 
        \State{\textbf{return} bestPlan}
        \label{line:final-return}
    \end{algorithmic}
\label{alg:shared}
\end{algorithm}

To generate new nodes, \ros expands existing nodes with predefined (kinematically) feasible motion primitives~\cite{Frazzoli2002_JGCD}.
In \ros, a motion primitive defines a local circular trajectory with some constant curvature
$\kappa \leq \kappa_{\max}$,
some length
$\delta\ell > 0$,
and some curving direction 
$\delta\theta \in [0, 2\pi)$.
That is, a motion primitive is defined as a tuple
$\M = (\kappa, \delta\ell, \delta\theta)$.
We denote the operation of applying a motion primitive~$\M$ to configuration $\mathbf{x}_v$ as
$\mathbf{x}_u = \mathbf{x}_v \oplus \M$,
where $\mathbf{x}_u$ is the resultant configuration.
\ros uses a fixed set of curvatures
$\{0, \kappa_{\max}\}$
and defines the resolution of a motion primitive as a function of 
$\delta\ell$ and $\delta\theta$, since any curvature
$\kappa \in [0, \kappa_{\max}]$ can be well approximated by interleaving curvature $0$ and $\kappa_{\max}$~\cite{Fu2021_RSS}.
Generally speaking, the finer a resolution is, the finer the intervals
$[0, \delta\ell_{\max}]$ and $[0, \delta\theta_{\max}]$ are discretized.
We mention that the coarsest resolution is set with a user-defined $\delta\ell_{\max}$ and $\delta\theta_{\max} = \pi/2$ (4 initial steering orientations as shown in line~\ref{line:init_coarse}).

In each iteration of \ros, an expansion of existing nodes is performed in an \astar-like fashion. In particular, nodes are iteratively extracted from the OPEN list (line~\ref{line:extract}), wherein nodes are ordered according to their \emph{rank} and a secondary metric
$f(\cdot)$.
We define the rank of a node as a function of the node's depth in the tree and the resolution of the motion primitives leading to the node. 
The deeper a node is in the tree and the finer resolution the motion primitives are, the higher rank a node has (see formal definition of rank in~\cite{Fu2021_RSS}).
The secondary metric $f(v) = \C(v) + h(v)$ has $\C(v)$ denoting the cost of the trajectory from the root of $\T$ to $v$ with respect to $\C$ and $h(v)$ being a heuristic function estimating the cost of the trajectory from the node~$v$ to the goal point.
For example, in the case where $\C$ is trajectory length, we have $h(v)$ be the length of the Dubins curve~\cite{LaValle2006_BOOK} on the plane spanned by $\mathbf{x}_v$ and $p_{\rm goal}$.
Unlike in \rcs, where nodes with lower rank are always extracted first, \ros relaxes this ordering by introducing a \textit{look-ahead} parameter denoted as $n_{\rm la} \in \mathbb{N}$ (similar to the idea in~\cite{Lindemann2006_ICRA} and~\cite{MSS18}).
At any time during the search, we denote the minimum rank of nodes in the OPEN list as $r_{\rm open}$.
Then we order all nodes with rank $r \leq r_{\rm open} + n_{\rm la}$ according to a secondary metric $f(\cdot)$. This is done to prioritize searching nodes from a coarser resolution, which speeds up finding an initial motion plan (this is similar in nature to using a focal list~\cite{PK82} in \astar-like algorithms).

Given an extracted node $v$, we first check if it is valid (line~\ref{line:valid}) using the conditions described for \rcs, which ensure that (i) the insertion length would not exceed $\ell_{\max}$, (ii) the goal region is still reachable after getting to $v$, 
(iii) the trajectory from the root to $v$ is not identical to another node that only needs coarser motion primitives to get to, and (iv) that the edge leading to $v$ is collision free~\cite{Fu2021_RSS}. In addition, \ros checks that the cost $\C(v)$ is smaller than the cost of the best plan reaching the goal region found so far. 
If the heuristic function~$h(\cdot)$ is admissible, we use $f(v)$ instead of~$\C(v)$ in the last condition, as $f$ provides a better estimate of the node cost and hence allows to prune more vertices.

To further boost efficiency, the algorithm avoids expanding nodes that are highly similar to existing nodes in terms of the induced configuration and cost by performing duplicate detection (line~\ref{line:similar-node}). 
A node $v$ is determined as a duplicate if there exists a node $u$ in the CLOSED list that satisfies (i')~$\rho(\mathbf{x}_u, \mathbf{x}_v) < d_{\rm sim}$ and (ii')~$\C(u) \leq \C(v)$,
where $\rho$ is a distance function defined on $\X$ and $d_{\rm sim}$ is a similarity parameter.
We use $\rho(\mathbf{x}_u, \mathbf{x}_v) = \|p_u - p_v\|_2 + \alpha\cdot{\rm dist}_\sphericalangle(q_u, q_v)$,
where $\alpha > 0$ is a weighting parameter and ${\rm dist}_\sphericalangle()$ is the angular distance between two orientations.
Sec.~\ref{sec:theory} specifies the value of $d_{\rm sim}$.
Condition (i') is shared between \rcs and \ros, while 
condition (ii'') is important for keeping \ros resolution optimal as it prevents nodes with lower cost from being pruned away by nodes with higher cost.

\ros uses a set of motion primitives of different resolutions, but instead of applying all motion primitives together, only the coarsest motion primitives are used when a node is initially expanded (line~\ref{line:expand}).
The resolution of a motion primitive (with respect to $\delta\ell$ and $\delta\theta$) is refined when a node with a coarser motion primitive is processed (line~\ref{line:refine-primitive}).
More specifically, a finer motion primitive is obtained by changing $\delta\ell$ or $\delta\theta$ by a small value that corresponds to a finer resolution (see~\cite{Fu2021_RSS} for the exact definitions).
Note that resolution refinement is done even if a node is invalid since finer resolutions might be valid.

Resolution refinement will be cut off when reaching a finer resolution, with respect to $\delta\ell$ or $\delta\theta$, than the predefined cutoff resolution $R_{\min}$ (line~\ref{line:cutoff}).
Here the cutoff resolution is determined by the minimal motions of the needle tip that are precisely achievable with the hardware system.

The algorithm terminates when the OPEN list is exhausted (line~\ref{line:termination}), and the best plan is returned (if any is found).
\ros is guaranteed to terminate in finite time due to the cutoff resolution.

\subsection{Domain-specific optimization}
\label{sec:domain-opt}

\begin{figure}
    \centering
    \includegraphics[width=\linewidth]{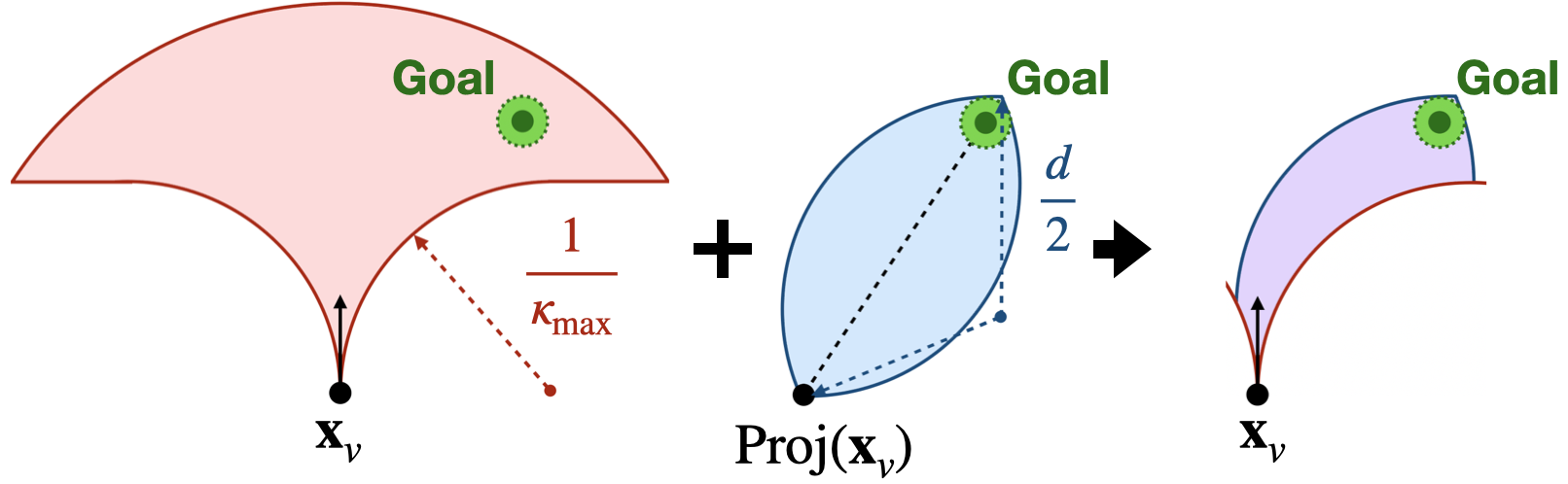}
    \caption{
    A 2D illustration of the approximated reachable workspace.
    The kinematically forward-reachable workspace is shaded in red (a 3D version can be obtained by rotating the region around the tangent vector at $\mathbf{x}_v$, which results in a trumpet shape).
    The feasible workspace is shaded in blue.
    The diameter of the circular arcs is
    $d = {\max}(2/\kappa_{\max}, \tau + \|{\rm Proj}(\mathbf{x}_v) - p_{\rm goal}\|_2)$.
    The final approximated reachable workspace is shaded in purple.
    }
    \label{fig:reachable-space}
\end{figure}

We describe an additional procedure to further improve \ros's performance.
We incorporate the concept of inevitable collisions~\cite{LaValle2006_BOOK} to eliminate potential nodes that would lead to collisions as they are expanded. 
In particular, for a given vertex~$v$ and the goal point, region growing is performed from $\mathbf{x}_v$ within an approximated reachable workspace, considering the existence of obstacles.
This region is defined as the intersection of the kinematically forward-reachable workspace and the olive-shaped feasible workspace defined by $\mathbf{x}_v$, $p_{\rm goal}$, and tolerance~$\tau$ (see Fig.~\ref{fig:reachable-space}). We mention that due to
(i)~maximum curvature constraint,
(ii)~maximum turning angle constraint (the needle would shear or buckle when turning over $\pi/2$),
and
(iii)~maximum insertion length constraint, the kinematically forward-reachable workspace for a given needle configuration is a trumpet-shaped volume 
(see Fig.~\ref{fig:reachable-space} left).
In the case that the goal is not reached by the grown region, $v$ is discarded.
For additional optimizations applicable to \ros, see~\ref{sec:appendix_optimization}.

\section{Resolution Optimality of \ros}
\label{sec:theory}

We study the theoretical properties of \ros and provide a proof for the algorithm being \emph{resolution optimal}.
Informally, resolution optimality implies that \ros is guaranteed to find a plan whose cost is as close as desired to the cost of the globally optimal qualified motion plan $\sigma^*$, assuming that the cutoff resolution $R_{\min} = \{\delta\ell_{\min}, \delta\theta_{\min}\}$ is fine enough.
Thm.~\ref{thm:resolution_optimal} given below, states our main theoretical contribution relating to the resolution optimality of \ros.

Before stating Thm.~\ref{thm:resolution_optimal}, we introduce the notions of a \emph{well-behaved cost}, as well as \emph{robust} and \emph{decomposable} trajectories, which will be used to state the necessary conditions on~$\C$ and~$\sigma^*$ required to prove our result.
The crux of the problem is that it may not be possible to approximate the optimal plan $\sigma^*$ using motion plans with a finite number of transitions without additional (realistic) constraints on the cost function $\C$ and on~$\sigma^*$.

To this end, we start to define the notion of a Well-behaved cost that states that close-by configurations have similar costs and that there are bounds on the values that the cost can attain.

\begin{dft}[Well-behaved cost]
    A configuration-based cost function~$c$ is well-behaved if
        (i) it is Lipschitz continuous, i.e.,
              $\forall \mathbf{x}_1, \mathbf{x}_2 \in \X_{\rm free}, \vert c(\mathbf{x}_1) - c(\mathbf{x}_2)\vert \leq L_c \cdot \rho(\mathbf{x}_1, \mathbf{x}_2)$ for $L_c \in \mathbb{R}$,
        and (ii) $\forall \mathbf{x} \in \X_{\rm free}, c(\mathbf{x}) \in [c_{\min}, c_{\max}]$, where $c_{\min}, c_{\max} \in \mathbb{R}$ and $c_{\min} > 0$.
    In such a case, we also say that the trajectory-based cost function $\C(\sigma) = \int_{0}^{\ell} {c\big(\sigma(s)\big)\mathrm{d}s}$ is well-behaved.
\end{dft}

We assume $c_{\max}$ is not infinitely large since such configurations can be removed from $\X_{\rm free}$.
We also require a well-behaved cost to satisfy $c_{\min} > 0$ since in the case of needle steering, there is always a cost associated with puncturing tissue.

Next, we provide two definitions that are used to characterize motion plans that \ros can approximate. 
The first definition (borrowed from~\cite{Fu2021_RSS}) is concerned with trajectories that are induced by a finite set of motion primitives (not necessarily the ones used by \ros). 
The second definition is concerned with so-called robust trajectories that admit some clearance from the obstacles.
A motion plan is then considered \textit{qualified} if it satisfies both definitions.

\begin{dft}[Decomposable trajectory]
A trajectory $\sigma:[0,l]\rightarrow \X$ is decomposable if it can be decomposed into a finite set of motion primitives. Namely, there exist primitives $M_\sigma = \{\M_1,\ldots, \M_n\}$ such that $\sigma = \sigma(0) \oplus M_\sigma $,
where $\mathbf{x} \oplus M$ denotes the resultant trajectory obtained by sequentially applying elements in $M$ to $\mathbf{x}$.
\end{dft}

\begin{dft}[Robust trajectory]
A trajectory $\sigma:[0,l]\rightarrow \X$ is $\delta$-robust, for some $\delta>0$,  if 
(i)~it has $\delta$ clearance from obstacles, i.e., 
$\min_{s\in [0,l], \mathbf{x} \in \X_{\rm obs}}\rho(\sigma(s), \mathbf{x}) > \delta$,  
and if 
(ii)~it's endpoint is within a distance of $\tau - \delta$ to the goal. Namely, $\|{\rm Proj}(\sigma(l)) - p_{\rm goal}\|_2 < \tau - \delta$.
Here, $\X_{\rm obs} = {\rm cl}(\X \setminus \X_{\rm free})$ and ${\rm cl(\cdot)}$ is the closure of a set.
Note that, we implicitly assume here that $\tau  > \delta$.
\end{dft}

We are ready to state our main theoretical result concerning the resolution optimality of \ros. 

\begin{thm}[Resolution optimality]
\label{thm:resolution_optimal}
    Let $\Delta = (\X, \W_{\rm obs}, \mathbf{x}_{\rm start}, p_{\rm goal}, \tau, \ell_{\max}, \kappa_{\max}, \C)$ be an optimal steerable needle motion-planning problem, $\varepsilon\in (0,\infty)$ be an approximation factor,
    and $\sigma^*$ be a trajectory. 
    Also, suppose that the following conditions are satisfied:
    \begin{enumerate}[label=(\textbf{C\arabic*})]
        \item 
        \label{C1}
        The steerable-needle system is Lipschitz continuous and characterized with $L_s$ (see~\ref{sec:appendix} for formal definition);
        \item
        \label{C2}
        The cost function~$\C$ is well-behaved and characterized with $L_c, c_{\min}, c_{\max}$.
        Denote $k = \frac{L_c + c_{\max}}{c_{\min}}$.
        \item 
        \label{C5}
        The optimal trajectory $\sigma^*$ is decomposable and $\delta$-robust with $\delta = {\min}\{\frac{\varepsilon}{k}, \frac{\tau}{2}\}$.
        \item
        \label{C6}
        The radius~$d_{\rm sim}$ used to reject similar nodes satisfies
    \end{enumerate}
    {\footnotesize
    $$
        d_{\rm sim} < {\min}
        \bigg\{
        \frac{2}{\kappa_{\max}}\sin{\frac{\kappa_{\max}\delta\ell_{\min}}{2}},
         \frac{\delta(L_s - 1)}{2(L_s^H - 1)}
        \bigg\}, \textup{where }
        H = \left\lceil \frac{\ell_{\max}}{\delta\ell_{\min}}\right\rceil.
    $$
    }
    Then \ros  is resolution optimal, i.e., for a fine-enough cutoff resolution~$R_{\min} = \{\delta\ell_{\min}, \delta\theta_{\min}\}$, \ros
    will find a motion plan that satisfies 
    $\C(\sigma) \leq (1 + \varepsilon)\cdot\C(\sigma^*)$.
\end{thm}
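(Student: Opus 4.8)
The plan is to show that the optimal plan $\sigma^*$ can be tracked by a trajectory that \ros actually explores, while controlling both the configuration-space deviation (so the tracking plan remains valid) and the accumulated cost (so it stays within the $(1+\varepsilon)$ factor). First I would invoke decomposability from~\ref{C5} to write $\sigma^*$ as a finite sequence of primitives $M_1,\ldots,M_n$, and replace each by the nearest primitive available at resolution $R_{\min}$, yielding an idealized approximating trajectory $\hat\sigma$. Using the Lipschitz continuity of the needle system~\ref{C1}, I would show that applying a common primitive to two configurations at distance $d$ yields configurations at distance at most $L_s d$, so a per-step discretization error propagates with geometric amplification and the total deviation over the at most $H=\lceil\ell_{\max}/\delta\ell_{\min}\rceil$ steps is bounded by the per-step error times $(L_s^H-1)/(L_s-1)$. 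By taking $R_{\min}$ fine enough, this discretization-induced deviation can be forced below $\delta/2$.

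The central step is to argue that \ros recovers a trajectory tracking $\hat\sigma$ despite duplicate pruning, which I would establish by induction on tree depth. At each step, the next node $v$ along the tracked plan is either expanded directly, or pruned by a node $u\in$ CLOSED with $\rho(\mathbf{x}_u,\mathbf{x}_v)<d_{\rm sim}$ and $\C(u)\le\C(v)$; in the latter case the construction continues from $u$. The first term in the bound on $d_{\rm sim}$ in~\ref{C6}, the chord length $\frac{2}{\kappa_{\max}}\sin\frac{\kappa_{\max}\delta\ell_{\min}}{2}$ of the minimal admissible arc, ensures $d_{\rm sim}$ is smaller than the displacement of a single finest primitive, so a node is never pruned as a duplicate of an immediate predecessor along the tracked plan; every duplicate switch therefore moves to a genuinely distinct node at distance below $d_{\rm sim}$ and of no greater cost. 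Propagating these displacements through the same Lipschitz amplification, the accumulated pruning displacement is at most $d_{\rm sim}\cdot(L_s^H-1)/(L_s-1)$, which the second term in~\ref{C6} forces below $\delta/2$. Crucially, because pruning only ever replaces a node by a duplicate of no greater cost, the cost of the tracked plan never exceeds that of $\hat\sigma$ at the corresponding depth.

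Combining the two error sources, the tracked plan $\sigma$ stays within $\delta$ of $\sigma^*$ throughout. Since $\sigma^*$ is $\delta$-robust by~\ref{C5}, this forces $\sigma$ to be collision-free with endpoint within $\tau$ of $p_{\rm goal}$, so $\sigma$ is a valid plan that \ros records as a candidate. For the cost, I would use the well-behaved cost assumption~\ref{C2}: a configuration deviation of at most $\delta$ inflates the cost by at most $L_c\delta$ per unit length, while any excess length (itself bounded on the order of $\delta$) contributes at most $c_{\max}$ per unit. Normalizing by the lower bound $\C(\sigma^*)\ge c_{\min}\ell_{\sigma^*}$ and using $\delta\le\varepsilon/k$ with $k=\frac{L_c+c_{\max}}{c_{\min}}$ from~\ref{C2} and~\ref{C5} yields $\C(\sigma)\le(1+\varepsilon)\C(\sigma^*)$. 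Finite termination follows because the cutoff resolution $R_{\min}$ bounds the refinement depth, so OPEN is exhausted after finitely many expansions, having considered $\sigma$ (or a cheaper valid plan), and \ros returns a plan of cost at most $\C(\sigma)$.

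I expect the main obstacle to be the inductive tracking argument under duplicate pruning: pruning can repeatedly displace the tracked plan, and these displacements compound through the nonlinear dynamics, so the deviation budget $\delta$ must be split carefully between discretization error and pruning displacement (hence the factor of $2$ in the denominator of~\ref{C6}), while simultaneously ensuring the cost comparison $\C(u)\le\C(v)$ is preserved through re-approximation so that the final cost stays bounded by $\C(\hat\sigma)$.
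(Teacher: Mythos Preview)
Your overall architecture matches the paper's proof closely: discretize $\sigma^*$ to obtain a primitive-based approximant, track it through the \ros tree while absorbing duplicate-pruning displacements via the Lipschitz geometric sum, and use $\delta$-robustness plus the well-behaved cost to conclude. Two points, however, are genuine gaps rather than mere imprecision.

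First, you omit the curvature-approximation step. The decomposition $\sigma^*=\sigma^*(0)\oplus\{M_1,\ldots,M_n\}$ may use arbitrary curvatures $\kappa\in[0,\kappa_{\max}]$, but \ros expands only with $\kappa\in\{0,\kappa_{\max}\}$. ``Replacing each $M_i$ by the nearest primitive at resolution $R_{\min}$'' therefore cannot produce your $\hat\sigma$ directly; the paper inserts an explicit duty-cycling lemma showing that any arc of curvature $\kappa$ can be piece-wise strictly approximated by alternating straight and maximal-curvature segments, and only then discretizes $\delta\ell,\delta\theta$. Without this intermediate step your $\hat\sigma$ need not lie in the set of trajectories \ros can generate.

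Second, and more substantively, the assertion ``the cost of the tracked plan never exceeds that of $\hat\sigma$ at the corresponding depth'' is false as stated. The pruning condition $\C(u)\le\C(v)$ controls only the \emph{prefix} cost at the switch; once you continue from $u$ by applying $M_{i+1}$, the resulting edge is a displaced copy of the corresponding edge of $\hat\sigma$ and can have \emph{larger} cost (by up to a factor $1+k\beta_n$ with $\beta_n=\tfrac{L_s^n-1}{L_s-1}d_{\rm sim}$, via the Lipschitz bound on $c$). Summing, one obtains $\C(\text{final node})\le(1+k\cdot\tfrac{\delta}{2})\,\C(\hat\sigma)$, not $\C(\hat\sigma)$. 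The paper absorbs this extra factor by choosing the discretization tightness $\beta$ strictly smaller than $\delta/2$ (specifically $\beta=\min\{\tfrac{\delta}{2},\tfrac{2\varepsilon-k\delta}{k(2+k\delta)}\}$) so that $(1+k\tfrac{\delta}{2})(1+k\beta)\le 1+\varepsilon$. Your single-step argument ``$\sigma$ is within $\delta$ of $\sigma^*$, hence $\C(\sigma)\le(1+k\delta)\C(\sigma^*)$'' does not repair this, because the object whose cost you must bound is the tree path to the final surviving node, which is \emph{not} the virtual trajectory lying in the $\delta$-tube around $\sigma^*$; the only handle on its cost is the telescoping inequality through the pruning conditions and the per-segment displacement bound.
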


\subsection{Proof sketch for Thm.~\ref{thm:resolution_optimal}}

We provide a sketch of the proof for Thm.~\ref{thm:resolution_optimal} and defer the full proof to~\ref{sec:appendix} .
The proof sketch consists of two main steps:  We first show that the plan $\sigma^*$
can be approximated by another plan $\sigma^*_\beta$ that is composed solely of the motion primitives used by \ros. Then, we show that even though \ros might not be able to exactly find $\sigma^*_\beta$ due to pruning, it will be able to recover another plan $\tilde{\sigma}^*_\beta$ whose cost is similar to that of $\sigma^*_\beta$ (and $\sigma^*$).

We assume that the conditions in Thm.~\ref{thm:resolution_optimal} are met for a plan~$\sigma^*$ and the approximation factor $\varepsilon>0$. 
As a first step, we show that there exists a resolution $R_{\min}$ and a trajectory~$\sigma^*_\beta$ that approximate $\sigma^*$ such that $\sigma^*_\beta$ can be constructed solely from the motion primitives of \ros. 
In particular, $\sigma^*_\beta$ is a \emph{piece-wise strict $\beta$-approximation} of~$\sigma^*$ which is defined as follows: the two trajectories $\sigma^*$ and $\sigma^*_\beta$ can be partitioned into a sequence of trajectories $\sigma_1,\ldots, \sigma_n$ and $\sigma'_1,\ldots, \sigma'_n$, respectively, for some positive and finite integer $n$, such that for any $1\leq i\leq n$ we have that 
(i)~the Hausdorff distance between $\sigma_i$ and $\sigma'_i$ is below some maximal threshold~$\beta>0$, 
and that 
(ii)~the cost of the two trajectories satisfies $\C(\sigma'_i)\leq (1+\beta)\cdot\C(\sigma_i)$ (see formal definition in~\ref{sec:appendix}).
This step extends and generalizes our previous proof~\cite[Lemma ~2]{Fu2021_RSS} that showed the existence of such an approximation, albeit only for the length cost, and relies on assumptions \textbf{C1} and \textbf{C2}, which characterize the system and cost function, and on  assumption~\textbf{C3}, which states that $\sigma^*$ is decomposable.

The value $\beta$ is chosen to guarantee that $\C(\sigma^*_\beta)\leq (1+\varepsilon)\cdot\C(\sigma^*)$. Moreover, the choice of $\beta$, and the fact that $\sigma^*_\beta$ is a $\beta$-approximation of $\sigma^*$ also ensure that $\sigma^*_\beta$ is collision free and satisfies the goal tolerance according to assumption \textbf{C3}. Note that the above properties would still hold even if we replace the constant $\beta$ with a slightly larger value $\beta'>\beta$. However, we use the more conservative value $\beta>0$ to compensate for the fact that \ros prunes the tree. In particular, due to pruning, \ros might eliminate some of the vertices induced by the trajectory $\sigma^*_\beta$, and thus, we cannot guarantee that $\sigma^*_\beta$ would be returned as a solution. However, we can show that even in the presence of pruning \ros will compute a valid plan $\tilde{\sigma}^*_\beta$ that tightly bounds the cost of $\sigma^*_\beta$ (and thus tightly bounds the cost of $\sigma^*$).

We now elaborate on this.
Denote the sequence of motion primitives that define $\sigma_{\beta}^*$ as 
$M_{\sigma_{\beta}^*} = \{\M_1,\dots,\M_n\}$.
When  the motions in
$M_{\sigma_{\beta}^*}$ are sequentially applied to 
$\mathbf{x}_{\rm start}$, 
we obtain a sequence of configurations
$\{\mathbf{x}_0,\mathbf{x}_1,\dots,\mathbf{x}_n\}$, 
where $\mathbf{x}_0 = \mathbf{x}_{\rm start}, \mathbf{x}_i = \mathbf{x}_{i-1} \oplus \M_i, i \in [1, n]$, some of which may be pruned.
By carefully bounding $d_{\rm sim}$ (according to~\textbf{C4}), we guarantee that in the worst case the trajectory obtained by applying the same sequence of motion primitives to pruned nodes stays within a collision-free ``tunnel'' around~$\sigma_\beta^*$, ensuring that it remains valid.
The choice of $d_{\rm sim}$ also takes care of goal tolerance, guaranteeing that the trajectory still satisfies $\tau$ goal tolerance.
Finally, because (i)~pruning is allowed only when there exists a node with equal or smaller cost, and 
(ii)~the subtrajectories obtained from applying the same motion primitive to close-by configurations are strict approximations, the plan $\tilde{\sigma}^*_\beta$ tightly bounds the cost of $\sigma^*_\beta$.

\section{Results}
\label{sec:results}

\begin{figure*}
    \centering
    \includegraphics[width=\linewidth]{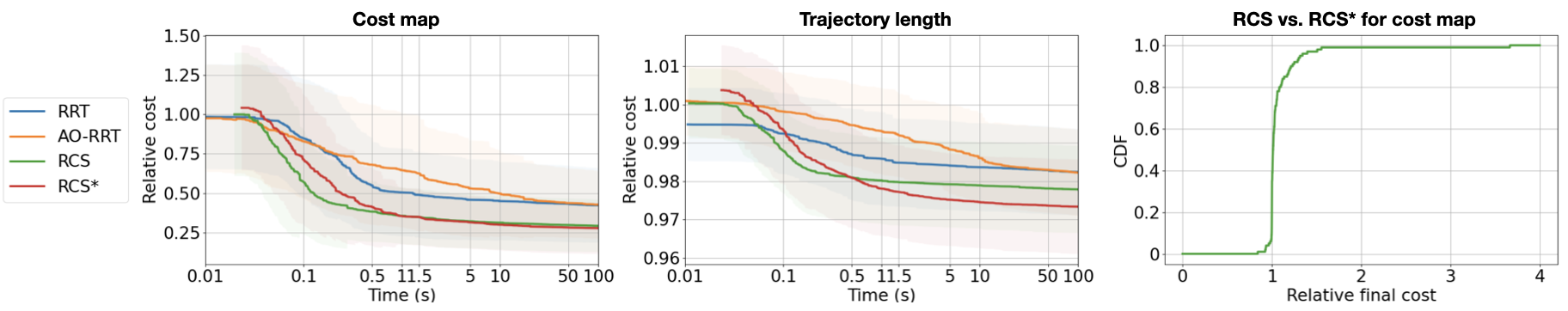}
    \caption{
    \textbf{Left and middle:}
    Performance comparison of the different motion planning methods with two different cost functions.
    The time axes use a logarithmic scale. The shaded regions show standard deviations.
    \textbf{Right:}
    A detailed comparison of \rcs and \ros for the cost-map setting, shown as a cumulative distribution function (CDF).
    }
    \label{fig:results}
\end{figure*}

We focus on the medical procedure of lung biopsy for evaluation.
Lung and bronchus cancer has the highest death rate among all types of cancer, killing over 130,000 Americans each year~\cite{ACS2022}.
Lung biopsy enables definitive diagnosis of suspicious lung nodules at an early stage, which is important to increase the survival rate.
One potential approach to safely and accurately access lung nodules for biopsy and localized treatment is deploying a steerable needle trans-orally through a bronchoscope to avoid transthoracic access which could cause severe side effects~\cite{Kuntz2016_Hamlyn,Swaney2017_JMRR, Hoelscher2021_RAL, Fried2021_ICRA}. 
In this approach, a physician deploys a bronchoscope through a patient's bronchial tubes and then the steerable needle is deployed through the bronchoscope, exits out of the bronchial tube, and steers in the lung parenchyma to reach the nodule.
Our motion planner focuses on the final stage of automatically steering the needle through the lung parenchyma to the nodule, while avoiding anatomical obstacles including large blood vessels, bronchial tubes, and the lung boundary.

We used the method developed in~\cite{Fu2018_IROS} to reconstruct the anatomical workspace from a chest CT scan with the above-mentioned obstacles.
We collected $100$ test cases. 
For each test case, we randomly sampled a start pose close to the bronchial tubes and a target in the lung parenchyma.
We finished collecting when the number of test cases reached $100$ after rejecting (i) impossible scenarios where the start pose has inevitable collision and (ii) trivial scenarios where the start pose can be connected directly to the goal point with a collision-free arc.
The simulated needle has 
$\kappa_{\max} = (50 {\rm mm})^{-1}, \ell_{\max} = 100 {\rm mm}$,
and a diameter of $2{\rm mm}$.
We set the goal tolerance $\tau = 1.0 {\rm mm}$.

We compared \ros in simulation with several planners:
\begin{enumerate}[label=(\roman*)]
    \item \textbf{\rrt:} The \rrt-based needle planner~\cite{Patil2014_TRO, Kuntz2015_IROS} with $5\%$ goal biasing and $100\%$ goal connecting ratio. 
    \item \textbf{\aorrt:} \aorrt~\cite{Hauser2016_TRO, Kleinbort2020_ICRA} adapted for steerable needles, with maximum rotation control $2\pi$ and a maximum insertion control $20 {\rm mm}$. We follow the guidelines in \cite{Kleinbort2020_ICRA} for cost sampling and distance weighting between the configuration space and cost space.
    For a fair comparison, we use the same goal connecting as \rrt. 
    \item \textbf{\aft:} The \aft-based needle planner~\cite{Liu2016_RAL, Pinzi2019_IJCARS}, with setup following \cite{Pinzi2019_IJCARS}.
    \aft internally uses a hybrid cost function; we use 
    $\C_{\rm hybrid}(\sigma) = \omega \cdot \C(\sigma) + \|\sigma(\ell_\sigma) - p_{\rm goal}\|_2/\tau$, where $\omega$ is a weighting parameter depending on the scale of $\C$. Note that $\C_{\rm hybrid}$ is only used internally in \aft while $\C$ is always used for performance comparison across different planners.
    \item \textbf{\rcs:} The \rcs needle planner~\cite{Fu2021_RSS} with cutoff resolution $R_{\min} = \{0.125 {\rm mm}, 0.157 {\rm rad}\}$, $\delta\ell_{\max} = 20 {\rm mm}$.
\end{enumerate}
\ros used the same basic setup as \rcs and $n_{\rm la} = 3$.
Additionally, for all cost functions, we set $\varepsilon = 0.1$.
Since we used a CPU implementation of \aft, we did not compare its performance over time, and only reported the final cost after three iterations, as suggested in~\cite{Liu2016_RAL}.
For each method, except for \aft, the timeout was set to $100$ seconds.
All methods except for \aft achieved $100\%$ success rate when timed out while \aft achieved $87\%$ success rate.
When reporting the relative cost of \aft, we only consider the test cases that are successfully solved by \aft.
As \rrt and \rcs are designed to find a plan instead of optimizing a plan, we modified them to keep running after the first plan is found, and always return the best plan found when timed out.
All experiments were run on a dual 2.1GHz 16-core Intel Xeon Silver 4216 CPU and 100GB of RAM.
All parallelizations were implemented with Motion Planning Templates (MPT)~\cite{Ichnowski2019_ICRA}.

We used two well-behaved cost functions for which \ros is resolution optimal:
(i)~trajectory length, and (ii)~a cost function informed by a cost map derived from medical images~\cite{Fu2018_IROS}, where each voxel in the 3D cost map is associated with a cost value that represents tissue damage.
We used trilinear interpolation to smooth out the voxelized cost map and forced $c_{\min} = 0.01$.

Fig.~\ref{fig:results} shows results for the different planners for the cost map and trajectory length cost functions.
For both cost functions, the cost of a plan may vary significantly between test cases.
For example, trajectory length is affected by how far away the target lies relative to the start pose and cost map values are much higher when the needle is steering in a vessel-cluttered region.
To account for the large variation between test cases, we first computed \emph{relative cost} within one test case.
Specifically, we took the first-found plan (no matter which method returns it as long as it is returned the fastest), and computed relative cost to this plan for all other plans found for the same test case.
We did this for all the test cases, and then averaged over the 100 test cases (see the first two plots in Fig.~\ref{fig:results}).
Such relative cost decreased as the result plans are gradually optimized.
\aft, which is omitted from those plots, achieved a final cost of $0.574$ and $0.991$ for cost map and trajectory length, respectively, after three iterations.

For both cost functions, \ros achieved the best final costs. 
For the cost map, which is more clinically relevant, \ros outperformed other methods after $1.5$ seconds with a final cost of $0.277$, which is $50\%$ lower than \aft ($0.574$), $35\%$ lower than \rrt ($0.423$) and \aorrt ($0.427$), and $7\%$ lower than \rcs ($0.299$).
This indicates that the final plan produced by \ros successfully avoids more small blood vessels than the other methods. It is worth mentioning that unlike trajectory length, small perturbations may lead to very different costs when we use the cost map. 
As for trajectory length, all methods generated trajectories with roughly similar lengths, although \ros computed shorter trajectories than all other methods. 

In Fig.~\ref{fig:results}, we report the cumulative distribution function across the 100 test cases of the relative final cost of \rcs with respect to \ros for the cost-map setting. Although \rcs achieved comparable costs with \ros in most cases, for about $20\%$ of the test cases \rcs achieved a motion plan that is at least $10\%$ more costly than \ros. 
In the extreme case, the final cost of \rcs was $3.67$ times that of \ros.

Finally, we mention that the number of nodes \ros explored is only $53\%$ (for cost map) and $30\%$ (for trajectory length) of that \rcs explored.
This indicates that although \ros spent more time choosing the right node to explore, it explores much fewer nodes than \rcs to get a plan with even higher quality.
The experimental results demonstrate that \ros is faster than competing methods and that the theoretical guarantees of \ros have a practical impact on the quality of result plans, which is valuable for computing motion plans that minimize patient trauma.

\section{Conclusion}
\label{sec:conclusion}

In this paper, we introduce the first resolution-optimal motion planner for steerable needles.
In particular, our method returns in finite time a motion plan whose cost can be as close as desired to the globally optimal qualified motion plan, assuming the given resolution is fine enough.
We also provide a proof sketch to show the resolution optimality of our method with a careful discussion of assumptions and required conditions.
We evaluate our proposed planner with simulation experiments and show it can efficiently compute high-quality motion plans considering clinically relevant cost functions.
In the future, we plan to investigate speedup techniques for vertex validation and pruning to reduce vertex expansion time.
In addition, we plan to develop explicit expressions for the resolution $R_{\min}$ necessary to achieve the desired level of approximation quality for a given problem instance.
We will also experimentally evaluate the planner with steerable needles in ex-vivo tissues.


\bibliographystyle{IEEEtran}
\bibliography{IEEEabrv, refabrv, references}

\begin{thebibliography}{10}
\providecommand{\url}[1]{#1}
\csname url@rmstyle\endcsname
\providecommand{\newblock}{\relax}
\providecommand{\bibinfo}[2]{#2}
\providecommand\BIBentrySTDinterwordspacing{\spaceskip=0pt\relax}
\providecommand\BIBentryALTinterwordstretchfactor{4}
\providecommand\BIBentryALTinterwordspacing{\spaceskip=\fontdimen2\font plus
\BIBentryALTinterwordstretchfactor\fontdimen3\font minus
  \fontdimen4\font\relax}
\providecommand\BIBforeignlanguage[2]{{%
\expandafter\ifx\csname l@#1\endcsname\relax
\typeout{** WARNING: IEEEtran.bst: No hyphenation pattern has been}%
\typeout{** loaded for the language `#1'. Using the pattern for}%
\typeout{** the default language instead.}%
\else
\language=\csname l@#1\endcsname
\fi
#2}}

\bibitem{Fu2022_GitHub}
M.~Fu, K.~Solovey, O.~Salzman, and R.~Alterovitz, ``steerable-needle-planner,''
  \url{https://github.com/UNC-Robotics/steerable-needle-planner}, 2021,
  accessed: 2022-02-15.

\bibitem{Fu2018_IROS}
M.~Fu, A.~Kuntz, R.~J. Webster~III, and R.~Alterovitz, ``Safe motion planning
  for steerable needles using cost maps automatically extracted from pulmonary
  images,'' in \emph{{IEEE/RSJ Int. Conf. Intelligent Robots and Systems
  (IROS)}}.\hskip 1em plus 0.5em minus 0.4em\relax IEEE, 2018, pp. 4942--4949.

\bibitem{Fu2021_RSS}
M.~Fu, O.~Salzman, and R.~Alterovitz, ``{Toward Certifiable Motion Planning for
  Medical Steerable Needles},'' in \emph{Proceedings of Robotics: Science and
  Systems}, Virtual, July 2021.

\bibitem{Abolhassani2007_MEP}
N.~Abolhassani, R.~Patel, and M.~Moallem, ``Needle insertion into soft tissue:
  A survey,'' \emph{Medical Engineering \& Physics}, vol.~29, no.~4, pp.
  413--431, 2007.

\bibitem{Alterovitz2005_ICRA}
R.~Alterovitz, K.~Goldberg, and A.~Okamura, ``Planning for steerable bevel-tip
  needle insertion through {2D} soft tissue with obstacles,'' in \emph{{{IEEE}
  Int. Conf. Robotics and Automation ({ICRA})}}.\hskip 1em plus 0.5em minus
  0.4em\relax IEEE, 2005, pp. 1640--1645.

\bibitem{Cowan2011_Chapter}
N.~J. Cowan, K.~Goldberg, G.~S. Chirikjian, G.~Fichtinger, R.~Alterovitz, K.~B.
  Reed, V.~Kallem, W.~Park, S.~Misra, and A.~M. Okamura, ``{Robotic needle
  steering: design, modeling, planning, and image guidance},'' in
  \emph{Surgical Robotics: System Applications and Visions}, J.~Rosen,
  B.~Hannaford, and R.~M. Satava, Eds.\hskip 1em plus 0.5em minus 0.4em\relax
  Springer, 2011, ch.~23, pp. 557--582.

\bibitem{Park2005_ICRA}
W.~Park, J.~S. Kim, Y.~Zhou, N.~J. Cowan, A.~M. Okamura, and G.~S. Chirikjian,
  ``{Diffusion-based motion planning for a nonholonomic flexible needle
  model},'' in \emph{Proc. IEEE Int. Conf. Robotics and Automation (ICRA)},
  Apr. 2005, pp. 4611--4616.

\bibitem{Webster2006_IJRR}
R.~J. Webster~III, J.~S. Kim, N.~J. Cowan, G.~S. Chirikjian, and A.~M. Okamura,
  ``Nonholonomic modeling of needle steering,'' \emph{{Int. J. Robotics
  Research (IJRR)}}, vol.~25, no. 5-6, pp. 509--525, 2006.

\bibitem{Favaro2018_ICRA}
A.~Favaro, L.~Cerri, S.~Galvan, F.~Rodriguez~y Baena, and E.~De~Momi,
  ``Automatic optimized {3D} path planner for steerable catheters with
  heuristic search and uncertainty tolerance,'' in \emph{{{IEEE} Int. Conf.
  Robotics and Automation ({ICRA})}}.\hskip 1em plus 0.5em minus 0.4em\relax
  IEEE, 2018, pp. 9--16.

\bibitem{Wein2008_IJRR}
R.~Wein, J.~Van Den~Berg, and D.~Halperin, ``Planning high-quality paths and
  corridors amidst obstacles,'' \emph{{Int. J. Robotics Research (IJRR)}},
  vol.~27, no. 11-12, pp. 1213--1231, 2008.

\bibitem{Agarwal2018_TALG}
P.~K. Agarwal, K.~Fox, and O.~Salzman, ``An efficient algorithm for computing
  high-quality paths amid polygonal obstacles,'' \emph{ACM Transactions on
  Algorithms (TALG)}, vol.~14, no.~4, pp. 1--21, 2018.

\bibitem{Kuntz2015_IROS}
A.~Kuntz, L.~G. Torres, R.~H. Feins, R.~J. Webster~III, and R.~Alterovitz,
  ``Motion planning for a three-stage multilumen transoral lung access
  system,'' in \emph{{IEEE/RSJ Int. Conf. Intelligent Robots and Systems
  (IROS)}}.\hskip 1em plus 0.5em minus 0.4em\relax IEEE, 2015, pp. 3255--3261.

\bibitem{Strub2021_arXiv}
M.~P. Strub and J.~D. Gammell, ``Admissible heuristics for obstacle clearance
  optimization objectives,'' \emph{arXiv preprint arXiv:2104.02298v2 [cs.RO]},
  2021.

\bibitem{BRRSK21}
M.~Bentley, C.~Rucker, C.~Reddy, O.~Salzman, and A.~Kuntz, ``A novel
  shaft-to-tissue force model for safer motion planning of steerable needles,''
  \emph{CoRR}, vol. abs/2101.02246, 2021.

\bibitem{Duindam2010_IJRR}
V.~Duindam, J.~Xu, R.~Alterovitz, S.~Sastry, and K.~Goldberg,
  ``Three-dimensional motion planning algorithms for steerable needles using
  inverse kinematics,'' \emph{{Int. J. Robotics Research (IJRR)}}, vol.~29,
  no.~7, pp. 789--800, 2010.

\bibitem{Hauser2009_RSS}
K.~Hauser, R.~Alterovitz, N.~Chentanez, A.~Okamura, and K.~Goldberg, ``Feedback
  control for steering needles through 3{D} deformable tissue using helical
  paths,'' in \emph{Proceedings of Robotics: Science and Systems}, Seattle,
  USA, June 2009.

\bibitem{Patil2014_TRO}
S.~Patil, J.~Burgner, R.~J. Webster~III, and R.~Alterovitz, ``Needle steering
  in 3{D} via rapid replanning,'' \emph{{{IEEE} Trans. Robotics}}, vol.~30,
  no.~4, pp. 853--864, 2014.

\bibitem{Seiler2012_IJRR}
K.~M. Seiler, S.~P. Singh, S.~Sukkarieh, and H.~Durrant-Whyte, ``Using {Lie}
  group symmetries for fast corrective motion planning,'' \emph{{Int. J.
  Robotics Research (IJRR)}}, vol.~31, no.~2, pp. 151--166, 2012.

\bibitem{Van2010_WAFR}
J.~Van Den~Berg, S.~Patil, R.~Alterovitz, P.~Abbeel, and K.~Goldberg,
  ``{LQG}-based planning, sensing, and control of steerable needles,'' in
  \emph{{Workshop on the Algorithmic Foundations of Robotics (WAFR)}}.\hskip
  1em plus 0.5em minus 0.4em\relax Springer, 2010, pp. 373--389.

\bibitem{Liu2016_RAL}
F.~Liu, A.~Garriga-Casanovas, R.~Secoli, and F.~Rodriguez~y Baena, ``Fast and
  adaptive fractal tree-based path planning for programmable bevel tip
  steerable needles,'' \emph{{IEEE Robotics and Automation Letters}}, vol.~1,
  no.~2, pp. 601--608, 2016.

\bibitem{Pinzi2019_IJCARS}
M.~Pinzi, S.~Galvan, and F.~Rodriguez~y Baena, ``The adaptive hermite fractal
  tree ({AHFT}): a novel surgical 3{D} path planning approach with curvature
  and heading constraints,'' \emph{{Int. J. Computer Assisted Radiology and
  Surgery}}, vol.~14, no.~4, pp. 659--670, 2019.

\bibitem{Lavalle.Kuffner.2001}
S.~M. LaValle and J.~J.~K. Jr., ``Randomized kinodynamic planning,'' \emph{Int.
  J. Robotics Res.}, vol.~20, no.~5, pp. 378--400, 2001.

\bibitem{Kleinbort2018_RAL}
M.~Kleinbort, K.~Solovey, Z.~Littlefield, K.~E. Bekris, and D.~Halperin,
  ``Probabilistic completeness of {RRT} for geometric and kinodynamic planning
  with forward propagation,'' \emph{{IEEE Robotics and Automation Letters}},
  vol.~4, no.~2, pp. x--xvi, 2018.

\bibitem{Hauser2016_TRO}
K.~Hauser and Y.~Zhou, ``Asymptotically optimal planning by feasible
  kinodynamic planning in a state--cost space,'' \emph{{{IEEE} Trans.
  Robotics}}, vol.~32, no.~6, pp. 1431--1443, 2016.

\bibitem{Kleinbort2020_ICRA}
M.~Kleinbort, E.~Granados, K.~Solovey, R.~Bonalli, K.~E. Bekris, and
  D.~Halperin, ``Refined analysis of asymptotically-optimal kinodynamic
  planning in the state-cost space,'' in \emph{{{IEEE} Int. Conf. Robotics and
  Automation ({ICRA})}}.\hskip 1em plus 0.5em minus 0.4em\relax IEEE, 2020, pp.
  6344--6350.

\bibitem{Karaman2011_IJRR}
S.~Karaman and E.~Frazzoli, ``Sampling-based algorithms for optimal motion
  planning,'' \emph{{Int. J. Robotics Research (IJRR)}}, vol.~30, no.~7, pp.
  846--894, 2011.

\bibitem{SoloveyEA2020}
K.~Solovey, L.~Janson, E.~Schmerling, E.~Frazzoli, and M.~Pavone, ``Revisiting
  the asymptotic optimality of {RRT*},'' in \emph{{IEEE} International
  Conference on Robotics and Automation ({ICRA})}, 2020, pp. 2189--2195.

\bibitem{Li2016_IJRR}
Y.~Li, Z.~Littlefield, and K.~E. Bekris, ``Asymptotically optimal
  sampling-based kinodynamic planning,'' \emph{{Int. J. Robotics Research
  (IJRR)}}, vol.~35, no.~5, pp. 528--564, 2016.

\bibitem{Sun2015_TRO}
W.~Sun, S.~Patil, and R.~Alterovitz, ``High-frequency replanning under
  uncertainty using parallel sampling-based motion planning,'' \emph{{{IEEE}
  Trans. Robotics}}, vol.~31, no.~1, pp. 104--116, 2015.

\bibitem{SH15a}
O.~Salzman and D.~Halperin, ``Asymptotically-optimal motion planning using
  lower bounds on cost,'' in \emph{{{IEEE} Int. Conf. Robotics and Automation
  ({ICRA})}}, 2015, pp. 4167--4172.

\bibitem{Tsao.ea.2020}
M.~Tsao, K.~Solovey, and M.~Pavone, ``Sample complexity of probabilistic
  roadmaps via {\(\epsilon\)}-nets,'' in \emph{International Conference on
  Robotics and Automation}.\hskip 1em plus 0.5em minus 0.4em\relax {IEEE},
  2020, pp. 2196--2202.

\bibitem{DayanSoloveyETAL2021}
D.~Dayan, K.~Solovey, M.~Pavone, and D.~Halperin, ``Near-optimal multi-robot
  motion planning with finite sampling,'' in \emph{{IEEE} International
  Conference on Robotics and Automation}, 2021.

\bibitem{Xu2008_ICASE}
J.~Xu, V.~Duindam, R.~Alterovitz, and K.~Goldberg, ``Motion planning for
  steerable needles in {3D} environments with obstacles using rapidly-exploring
  random trees and backchaining,'' in \emph{{{IEEE} Int. Conf. Automation
  Science and Engineering}}.\hskip 1em plus 0.5em minus 0.4em\relax IEEE, 2008,
  pp. 41--46.

\bibitem{Barraquand1991_IJRR}
J.~Barraquand and J.-C. Latombe, ``Robot motion planning: A distributed
  representation approach,'' \emph{{Int. J. Robotics Research (IJRR)}},
  vol.~10, no.~6, pp. 628--649, 1991.

\bibitem{Barraquand1993_Algorithmica}
------, ``Nonholonomic multibody mobile robots: Controllability and motion
  planning in the presence of obstacles,'' \emph{Algorithmica}, vol.~10, no.~2,
  pp. 121--155, 1993.

\bibitem{Cheng2002_ICRA}
P.~Cheng and S.~M. LaValle, ``Resolution complete rapidly-exploring random
  trees,'' in \emph{{{IEEE} Int. Conf. Robotics and Automation ({ICRA})}},
  vol.~1.\hskip 1em plus 0.5em minus 0.4em\relax IEEE, 2002, pp. 267--272.

\bibitem{Lindemann2006_ICRA}
S.~R. Lindemann and S.~M. LaValle, ``Multiresolution approach for motion
  planning under differential constraints,'' in \emph{{{IEEE} Int. Conf.
  Robotics and Automation ({ICRA})}}.\hskip 1em plus 0.5em minus 0.4em\relax
  IEEE, 2006, pp. 139--144.

\bibitem{Yershov2010_WAFR}
D.~S. Yershov and S.~M. LaValle, ``Sufficient conditions for the existence of
  resolution complete planning algorithms,'' in \emph{{Workshop on the
  Algorithmic Foundations of Robotics (WAFR)}}.\hskip 1em plus 0.5em minus
  0.4em\relax Springer, 2010, pp. 303--320.

\bibitem{Gammell2021_AR}
J.~D. Gammell and M.~P. Strub, ``Asymptotically optimal sampling-based motion
  planning methods,'' \emph{Annual Review of Control, Robotics, and Autonomous
  Systems}, vol.~4, pp. 295--318, 2021.

\bibitem{Shome2021_ICRA}
R.~Shome and L.~E. Kavraki, ``Asymptotically optimal kinodynamic planning using
  bundles of edges,'' in \emph{{{IEEE} Int. Conf. Robotics and Automation
  ({ICRA})}}, vol.~30, 2021.

\bibitem{Pivtoraiko2009_JFR}
M.~Pivtoraiko, R.~A. Knepper, and A.~Kelly, ``Differentially constrained mobile
  robot motion planning in state lattices,'' \emph{Journal of Field Robotics},
  vol.~26, no.~3, pp. 308--333, 2009.

\bibitem{Ljungqvist2017_IVS}
O.~Ljungqvist, N.~Evestedt, M.~Cirillo, D.~Axehill, and O.~Holmer,
  ``Lattice-based motion planning for a general 2-trailer system,'' in
  \emph{{{IEEE} Intelligent Vehicles Symposium (IV)}}.\hskip 1em plus 0.5em
  minus 0.4em\relax IEEE, 2017, pp. 819--824.

\bibitem{Frazzoli2002_JGCD}
E.~Frazzoli, M.~A. Dahleh, and E.~Feron, ``Real-time motion planning for agile
  autonomous vehicles,'' \emph{Journal of Guidance, Control, and Dynamics},
  vol.~25, no.~1, pp. 116--129, 2002.

\bibitem{LaValle2006_BOOK}
S.~M. LaValle, \emph{Planning algorithms}.\hskip 1em plus 0.5em minus
  0.4em\relax Cambridge university press, 2006.

\bibitem{MSS18}
A.~Mandalika, O.~Salzman, and S.~S. Srinivasa, ``Lazy receding horizon {A}* for
  efficient path planning in graphs with expensive-to-evaluate edges,'' in
  \emph{{Int. Conf. Automated Planning and Scheduling (ICAPS)}}, 2018, pp.
  476--484.

\bibitem{PK82}
J.~Pearl and J.~H. Kim, ``Studies in semi-admissible heuristics,'' \emph{{IEEE}
  Trans. Pattern Anal. Mach. Intell}, no.~4, pp. 392--399, 1982.

\bibitem{ACS2022}
R.~L. Siegel, K.~D. Miller, H.~E. Fuchs, and A.~Jemal, ``Cancer statistics,
  2022,'' \emph{CA: a cancer journal for clinicians}, 2022.

\bibitem{Kuntz2016_Hamlyn}
A.~Kuntz, P.~J. Swaney, A.~Mahoney, R.~H. Feins, Y.~Z. Lee, R.~J. Webster~III,
  and R.~Alterovitz, ``{Toward transoral peripheral lung access: Steering
  bronchoscope-deployed needles through porcine lung tissue},'' in \emph{Hamlyn
  Symposium on Medical Robotics}, 2016, pp. 9--10.

\bibitem{Swaney2017_JMRR}
P.~J. Swaney, A.~W. Mahoney, B.~I. Hartley, A.~A. Remirez, E.~Lamers, R.~H.
  Feins, R.~Alterovitz, and R.~J. Webster~III, ``Toward transoral peripheral
  lung access: Combining continuum robots and steerable needles,''
  \emph{Journal of Medical Robotics Research}, vol.~2, no.~01, p. 1750001,
  2017.

\bibitem{Hoelscher2021_RAL}
J.~Hoelscher, M.~Fu, I.~Fried, M.~Emerson, T.~E. Ertop, M.~Rox, A.~Kuntz, J.~A.
  Akulian, R.~J. Webster~III, and R.~Alterovitz, ``Backward planning for a
  multi-stage steerable needle lung robot,'' \emph{{IEEE Robotics and
  Automation Letters}}, vol.~6, no.~2, pp. 3987--3994, 2021.

\bibitem{Fried2021_ICRA}
I.~Fried, J.~Hoelscher, M.~Fu, M.~Emerson, T.~E. Ertop, M.~Rox, J.~Granna,
  A.~Kuntz, J.~A. Akulian, R.~J. Webster~III, and R.~Alterovitz, ``Design
  considerations for a steerable needle robot to maximize reachable lung
  volume,'' in \emph{{{IEEE} Int. Conf. Robotics and Automation
  ({ICRA})}}.\hskip 1em plus 0.5em minus 0.4em\relax IEEE, 2021.

\bibitem{Ichnowski2019_ICRA}
J.~Ichnowski and R.~Alterovitz, ``Motion planning templates: A motion planning
  framework for robots with low-power {CPU}s,'' in \emph{{{IEEE} Int. Conf.
  Robotics and Automation ({ICRA})}}.\hskip 1em plus 0.5em minus 0.4em\relax
  IEEE, 2019, pp. 612--618.

\bibitem{Minhas2007_EMBC}
D.~S. Minhas, J.~A. Engh, M.~M. Fenske, and C.~N. Riviere, ``Modeling of needle
  steering via duty-cycled spinning,'' in \emph{{Annual International
  Conference of the {IEEE} Engineering in Medicine and Biology Society
  (EMBC)}}.\hskip 1em plus 0.5em minus 0.4em\relax IEEE, 2007, pp. 2756--2759.

\end{thebibliography}

\clearpage

\setcounter{section}{0}
\renewcommand{\thesection}{Appendix \Alph{section}}

\setcounter{subsection}{0}
\renewcommand{\thesubsection}{\Alph{subsection}}

\setcounter{figure}{0}
\renewcommand{\thefigure}{\Alph{figure}}

\section{Domain-specific optimizations}
\label{sec:appendix_optimization}

\begin{figure}
    \centering
    \includegraphics[width=\linewidth]{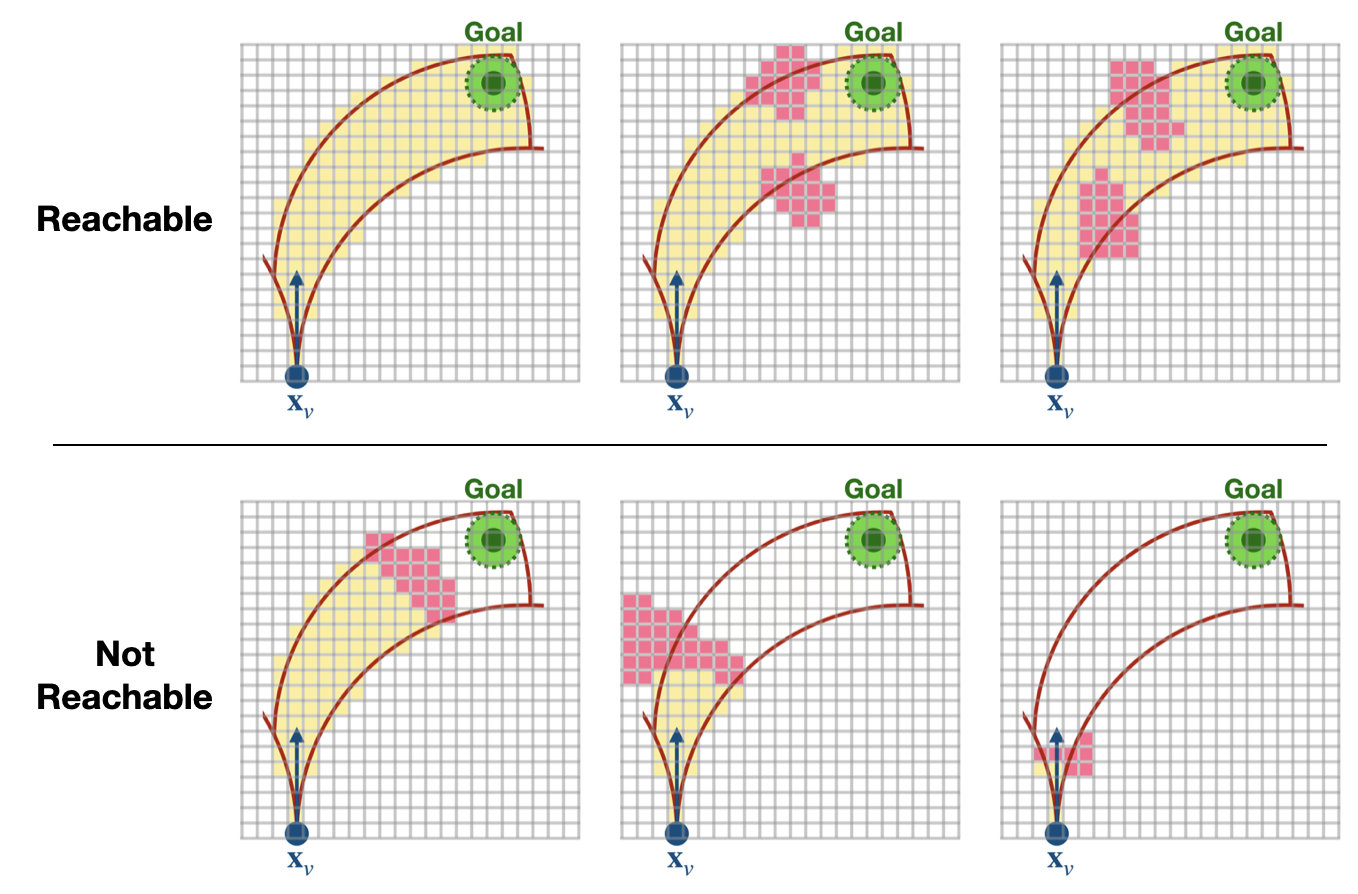}
    \caption{
    2D illustration of example cases for inevitable collision check.
    The connected region is shaded yellow, obstacle voxels are shaded pink.
    This is an over estimation so even if the goal is determined as reachable in the check, it is not guaranteed that a valid motion plan to the goal exists.
    }
    \label{fig:examples}
\end{figure}

We describe additional domain-specific optimizations to further improve \ros's performance.

First we elaborate on the a method accounting for inevitable collisions~\cite{LaValle2006_BOOK} to eliminate potential nodes that are bound to lead to collisions as they are expanded, that we briefly described in Sec.~\ref{sec:domain-opt}.
In particular, for a given vertex~$v$ and the goal point, a ``region-growing'' process is performed from $\mathbf{x}_v$ within an approximation of the reachable workspace, while considering the existence of obstacles.
This region is defined as the intersection of the kinematically forward-reachable workspace and the olive-shaped feasible workspace defined by $\mathbf{x}_v$, $p_{\rm goal}$, and tolerance~$\tau$ (see Fig.~\ref{fig:reachable-space}).

We mention that due to 
(i)~maximum curvature constraint,
(ii)~maximum turning angle constraint (the needle would shear or buckle when turning over $\pi/2$),
and
(iii)~maximum insertion length constraint, the kinematically forward-reachable workspace for a given needle configuration is a trumpet-shaped volume rooted at the current needle position (see Fig.~\ref{fig:reachable-space}  left).

Additionally, a position in the workspace is potentially feasible only when there exists some orientation with which the goal region is forward reachable while the start point is backward reachable, considering the maximum curvature constraint.
This defines the olive-shaped feasible workspace (see Fig.~\ref{fig:reachable-space} middle) since for any position outside the region, there is no orientation that is valid.

In the case that the goal is not reached by the grown region, $v$ is discarded.
Several examples are provided in Fig.~\ref{fig:examples}.
Inevitable collision check allows us to efficiently identify and discard invalid branches without refining the resolution.

In addition to the above-mentioned optimization, all domain-specific optimizations introduced for \rcs~\cite{Fu2021_RSS} are also applicable to \ros.
More specifically, the optimizations are as follows.
(i)~Early pruning by testing for goal reachability. I.e., we check if the goal region overlaps with the kinematically forward-reachable workspace for a configuration.
(ii)~Direct goal connection. I.e., we try connecting a configuration to the goal point with a kinematically-feasible trajectory, and a plan is found if the trajectory is collision free. In \rcs, a circular arc with constant curvature is used while in \ros, we use the trajectory with the  shortest length, which is the Dubins curve~\cite{LaValle2006_BOOK}.
(iii)~Equivalent node pruning. I.e., we avoid duplicated nodes caused by refining $\delta\ell$ and $\delta\theta$ in a different order.
(iv)~Parallelism. I.e., we process multiple nodes in parallel to improve the overall runtime.
Refer to~\cite{Fu2021_RSS} for more details.

\section{Full Proof for the Resolution Optimality of \ros}
\label{sec:appendix}
We provide  a full proof of Theorem~\ref{thm:resolution_optimal}, which states that \ros is resolution optimal. Our proof below follows the proof sketch we provided in Section~\ref{sec:theory}. 
We start this section by providing some definitions that were omitted in the proof sketch. We then establish a basic property stating that decomposable trajectories can be approximated by a finite sequence of motion primitives. We then exploit this property in the full proof of Thm.~\ref{thm:resolution_optimal}.

\subsection{Preliminaries}
\label{sec:preliminaries}

We provide a formal definition of the notion of piece-wise strict $\beta$-approximation, which we briefly described in the proof sketch. We first define this notion for a single local ``piece'' in the following definition, and then extend it for trajectories consisting of several pieces. 

\begin{dft}[Local strict approximation]
\label{dft:strict-approx}
    For two trajectories $\sigma : [0, l] \rightarrow \X$ and $\sigma' : [0, l'] \rightarrow \X$, and a value $\beta>0$,  we say $\sigma'$ is a local strict $\beta$-approximation of $\sigma$ if 
    \begin{enumerate}[label=(\roman*)]
        \item $l' \leq (1 + \beta)\cdot l$,
        \item $\forall s \in [0, {\min}(l, l')), \rho\left(\sigma(s), \sigma'(s)\right) \leq \beta$,
        \item $\forall s \in [{\min}(l, l'), l'], \rho\left(\sigma(l), \sigma'(s)\right) \leq \beta$.
    \end{enumerate}
\end{dft}

Note that this definition is stricter than bounding the two-way Hausdorff distance between two trajectories since it requires bounded trajectory length and bounded distances between corresponding points on the trajectories. 
This is a major difference from our previous definition~\cite{Fu2021_RSS}, where we used non-strict approximations. This will be an important tool for bounding the cost of the trajectory computed by \ros. 
Next, we extend this notion to multiple pieces. 

\begin{dft}[Piece-wise strict approximation]
\label{dft:piecewise-strict-approx}
    For two trajectories $\sigma : [0, l] \rightarrow \X$ and $\sigma' : [0, l'] \rightarrow \X$, and a value $\beta>0$, we say $\sigma'$ is a piece-wise strict $\beta$-approximation of $\sigma$ if 
    there exists two sequences
    $\{s_0, s_1,\dots,s_n\}$ and $\{s_0', s_1',\dots,s_n'\}$
    that
    \begin{enumerate}[label=(\roman*)]
        \item $s_0 = 0, s_0' = 0$,
        \item $s_n = l, s_n' = l'$, and
        \item $\forall i \in [0, n-1]$, the sub-trajectory  $\sigma'(s_i', s_{i+1}')$ is a local strict $\beta$-approximation of $\sigma(s_i, s_{i+1})$.
    \end{enumerate}
\end{dft}

Next, we provide a formal definition of Lipschitz continuity of the steerable-needle system, which is used in \textbf{(C1)} of Thm.~\ref{thm:resolution_optimal}. Following~\cite{Fu2021_RSS}, we define the action space (or motion space)~$\A \in \mathbb{R}^3$ to be the set of all feasible motion primitives.
A motion primitive~$\M = (\kappa, \delta\ell, \delta\theta)$ is feasible if 
$\kappa \in [0, \kappa_{\max}], \delta\ell > 0, \delta\theta \in [0, 2\pi)$. We then define Lipschitz continuity in our primitive-based setting, which is based on the following primitive-based metric.

\begin{dft}[Primitive-based metric $\rho_\A$]
\label{dft:action-distance}
    We define a distance metric on an action space~$\A$ as a distance between two resultant trajectories from
    applying $\M_1$ and $\M_2$ to
    $\mathbf{x}$,
    where $\M_1,\M_2\in \A$ and $\x\in \X$.
    Formally, we have
    \begin{equation*}
    \begin{split}
        &\rho_{\A}(\M_1, \M_2)\\
        &:= {\max}
        \Big\{
        \max_{s \in [0,l_{\min})}
        \rho\big(\sigma_{\M_1}(s), \sigma_{\M_2}(s)\big),\\
        & \max_{s \in [l_{\min},l_{\max}]}
        \rho\big(\sigma_{\M_1}({\min}\{s, l_1\}), \sigma_{\M_2}({\min}\{s, l_2\})\big)
        \Big\},
    \end{split}
    \end{equation*}
    where
    $\sigma_{\M_1}, \sigma_{\M_2}$ are the resultant trajectories,
    $l_1, l_2$ are the trajectory length of $\sigma_{\M_1}$ and $\sigma_{\M_2}$, respectively, 
    $l_{\min} = {\min}\{l_1, l_2\}$, and
    $l_{\max} = {\max}\{l_1, l_2\}$.
\end{dft}

In the above definition note that changing the initial configuration $\mathbf{x}$ does not change the relative position between the two trajectories.
    Thus, without loss of generality, we have 
    $\mathbf{x} = (p, q)$
    where
    $p = (0,0,0)$
    and
    $q = (1,0,0,0)$.

\begin{dft}[Lipschitz continuous]
\label{dft:lipschitz}
    The system is Lipschitz continuous if
    for any $ \mathbf{x}_1, \mathbf{x}_2 \in \X,  \M_1, \M_2 \in \A$, it holds that 
    \begin{equation*} \rho(\mathbf{x}_1 \oplus \M_1, \mathbf{x}_2 \oplus \M_2) \leq L_s\big(\rho(\mathbf{x}_1, \mathbf{x}_2) + \rho_{\A}(\M_1, \M_2)\big),
    \end{equation*}   
    for some constant $L_s > 0$.
\end{dft}

Finally, we introduce the notion of the \textit{finest set} of motion primitives.
\begin{dft}[Finest set of motion primitives]
\label{dft:finest_set}
    Given a resolution $R = \{r_\ell, r_\theta\}$,
    and a set of curvatures~$K$,
    we define the \emph{finest set of motion primitives} as
    \begin{equation*}
        M_{\rm fs}(R,K) = 
        \bigg\{
         (\kappa, r_\ell, n\cdot r_\theta) 
         ~\Big|~
         \kappa \in K,
         n \in \left[ 
                    0, \left \lfloor\frac{2\pi}{r_{\theta}}\right \rfloor   
               \right] \subset \mathbb{Z}
        \bigg\}.
    \end{equation*}
\end{dft}

\subsection{Approximation of decomposable trajectories}
We temporarily set aside the study of \ros's behavior. We prove the following basic result showing that  any decomposable trajectory can be approximated to any desirable degree by a finite sequence of motion primitives. 

\begin{thm}
\label{thm:decomposition}
    Let~$\sigma$ be a decomposable trajectory
    and let~$\beta > 0$ be some real value.
    If the system is Lipschitz continuous, 
    there exists a fine resolution $R(\sigma, \beta) = \{r_{\ell}, r_{\theta}\}$
    and a finite sequence of motion primitives 
    $M_{R(\sigma, \beta)} \subseteq M_{\rm fs}(R(\sigma, \beta), \{0, \kappa_{\max}\})$
    such that  
    $\sigma':=\sigma(0) \oplus M_{R(\sigma, \beta)}$
    is a piece-wise strict $\beta$-approximation of $\sigma$. Moreover, if $c$ is a well-behaved cost (characterized with $L_c, c_{\min}, c_{\max}$),
    then the trajectory cost  satisfies
    $\C(\sigma') \leq (1 + k\cdot\beta) \cdot \C(\sigma)$,
    where $k = \frac{L_c + c_{\max}}{c_{\min}}$.
\end{thm}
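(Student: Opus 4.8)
The plan is to prove the two assertions of the statement in turn: first the existence of a resolution $R(\sigma,\beta)$ and a finite primitive sequence $M_{R(\sigma,\beta)}$ whose induced trajectory $\sigma'$ is a piece-wise strict $\beta$-approximation of $\sigma$, and then the cost inequality, which I expect to follow almost mechanically from the approximation guarantees together with the well-behaved-cost hypothesis. For the existence part I would start from decomposability and write $\sigma = \sigma(0)\oplus\{\M_1,\dots,\M_m\}$ with $m$ finite and $\M_i=(\kappa_i,\delta\ell_i,\delta\theta_i)$. I then replace each $\M_i$ by a finite block $S_i\subseteq M_{\rm fs}(R,\{0,\kappa_{\max}\})$ of finest-set primitives approximating the arc generated by $\M_i$, using three discretizations: (a) the curving direction $\delta\theta_i$ is rounded to the nearest multiple of $r_\theta$, with error at most $r_\theta/2$; (b) the length $\delta\ell_i$ is realized as an integer number of length-$r_\ell$ segments, with error at most $r_\ell$, which I bound by $\beta\,\delta\ell_i$ by taking $r_\ell\le\beta\min_i\delta\ell_i$, thereby securing condition (i) of a local strict approximation; and (c) the arbitrary curvature $\kappa_i\in[0,\kappa_{\max}]$, which is matched only \emph{on average} by a bang-bang interleaving of the admissible curvatures $0$ and $\kappa_{\max}$, so that the realized average curvature per block tends to $\kappa_i$ as $r_\ell\to 0$. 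Concatenating the $S_i$ yields $M_{R(\sigma,\beta)}$ and $\sigma':=\sigma(0)\oplus M_{R(\sigma,\beta)}$, and the partition at the original primitive boundaries supplies the sequences $\{s_i\},\{s_i'\}$ in the definition of a piece-wise strict approximation.

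The core of the argument is verifying conditions (ii)--(iii) of the local strict approximation on each piece, i.e.\ that corresponding points stay within $\beta$. Each such distance has two sources: the \emph{intrinsic} error between the fixed-length arc of $\M_i$ and its discretized realization $S_i$ when both start from the same configuration, and the \emph{inherited} offset $e_i:=\rho(\mathbf{x}_i,\mathbf{x}_i')$ accumulated from the preceding pieces. For the intrinsic error I would invoke Lipschitz continuity of the system (Def.~\ref{dft:lipschitz}) together with the primitive metric $\rho_{\A}$ (Def.~\ref{dft:action-distance}): since direction, length, and average curvature all converge to those of $\M_i$, the resultant sub-trajectories converge uniformly over the fixed horizon $\delta\ell_i$, so the intrinsic error can be driven below any threshold by shrinking $R$. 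For the inherited offset I would set up the recursion $e_{i+1}\le L_s(e_i+\epsilon_i)$, where $\epsilon_i$ is the per-piece contribution; unrolling over the pieces gives the geometric bound $e_i\le(\max_j\epsilon_j)\cdot\frac{L_s^{m}-1}{L_s-1}$, so since $m$ is finite and fixed, choosing $R$ fine enough forces both the intrinsic errors and the accumulated offsets below $\beta$ on every piece simultaneously. Condition (iii) (the short tail when $l_i'>l_i$) follows because the excess length is at most $r_\ell$ and the trajectory has bounded speed.

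I expect the main obstacle to be precisely this error-propagation bookkeeping. The naive worry is that the bang-bang realization of each arc uses a number of finest-set primitives growing like $\delta\ell_i/r_\ell\to\infty$, so a per-finest-primitive application of the single constant $L_s$ would give an amplification $L_s^{\,\delta\ell_i/r_\ell}$ that blows up as $R\to 0$ and could swamp the shrinking per-step error. The resolution I would stress is that the relevant amplification is over the \emph{fixed, finite} number $m$ of pieces, each of \emph{fixed} length, so the horizon of each tracking estimate never grows with the resolution; one then only needs the within-piece (fixed-horizon) error to vanish as $R\to 0$, which the averaging property of bang-bang control provides. Cleanly decoupling this fixed-horizon intrinsic convergence from the finite cross-piece geometric accumulation is the delicate point and is where I would concentrate in the full proof.

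Finally, for the cost inequality I would bound each piece separately using the piece-wise strict $\beta$-approximation. On piece $i$ (taking the representative case $l_i'\ge l_i$; the other case is easier), split $\C(\sigma_i')=\int_0^{l_i'}c(\sigma_i'(s))\,ds$ into the overlap $[0,l_i]$ and the tail $[l_i,l_i']$. On the overlap, Lipschitz continuity of $c$ with condition (ii) gives $c(\sigma_i'(s))\le c(\sigma_i(s))+L_c\beta$, contributing at most $\C(\sigma_i)+L_c\beta\,l_i$; on the tail, whose length is at most $\beta\,l_i$ by condition (i), the integrand is at most $c_{\max}$, contributing at most $c_{\max}\beta\,l_i$. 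Using $l_i\le\C(\sigma_i)/c_{\min}$ (from $c\ge c_{\min}>0$) then yields $\C(\sigma_i')\le\big(1+\tfrac{L_c+c_{\max}}{c_{\min}}\beta\big)\C(\sigma_i)=(1+k\beta)\C(\sigma_i)$, and summing over the $m$ pieces gives $\C(\sigma')\le(1+k\beta)\C(\sigma)$, which completes the proof.
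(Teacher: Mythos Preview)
Your proposal is correct and uses the same ingredients as the paper: bang-bang (duty-cycling) to kill arbitrary curvatures, discretization of $\delta\theta$ and $\delta\ell$ to land in the finest set, Lipschitz continuity of the system to control error accumulation across the finitely many original pieces via a geometric sum in $L_s$, and the cost argument via the overlap/tail split, which is line-for-line the paper's Lemma~\ref{lem:similar-cost}.

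The one structural difference worth noting is that the paper separates the approximation into two composed steps rather than your single pass. First (Lemma~\ref{lem:duty-cycling}) it replaces each arc of arbitrary curvature by a duty-cycled sequence in $\{0,\kappa_{\max}\}$ constructed so that the \emph{start and end configurations of each original piece are matched exactly}; this means no error propagates across pieces in that stage, and the within-piece Hausdorff and length bounds follow from elementary geometry of the three-segment construction, with no appeal to the system's Lipschitz constant. Only then (Lemma~\ref{lem:resolution-existance}) does it discretize $\delta\ell$ and $\delta\theta$ to the finest set, now applying the Lipschitz bound and the geometric sum over the (fixed, finite) number of duty-cycled primitives. The two stages are stitched with $\beta_d=\beta_r=\sqrt{1+\beta}-1$ so that condition~(i) of the local strict approximation composes multiplicatively. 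Your one-shot approach is valid but forces you to argue the ``fixed-horizon intrinsic convergence'' of the bang-bang realization directly, which is exactly the point you flag as delicate; the paper sidesteps that delicacy by buying exact endpoint matching in the first stage, at the price of the extra composition bookkeeping.
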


We break the proof of this result into the following steps.

\subsubsection{Approximating curves with arbitrary curvatures}
\label{subsec:duty-cycling}
As a first step towards proving Thm.~\ref{thm:decomposition}, we show that a trajectory~$\sigma$ as in Thm.~\ref{thm:decomposition}, which has arbitrary curvature, can be approximated by a finite sequence of motion primitives whose curvature is either $0$ or $\kappa_{\max}$. We provide a justification of this property below. 

When a bevel-tip needle is inserted only, it follows a trajectory with curvature $\kappa_{\max}$.
When the needle is inserted while applying axial rotational velocity that is relatively larger than the insertion velocity, it follows a straight line (i.e., of curvature zero).
Minhans et al.~\cite{Minhas2007_EMBC} introduced the notion of  duty-cycling to approximate any curvature for bevel-tip steerable needles.
Roughly speaking, combining periods of needle spinning (i.e., zero-curvature trajectories) with periods of non-spinning (i.e., maximal-curvature trajectories) enables the needle to achieve any curvature up to the maximum needle curvature.
This idea is formalized in the following lemma.
\begin{lem}[Arbitrary curvature approximation using duty-cycling]
\label{lem:duty-cycling}
    Let~$\sigma$ be a decomposable trajectory
    and let~ $\beta_d > 0$ be some real value.
    There exists a finite sequence of motion primitives $M_D$ in which every element has curvature  $\kappa \in \{0, \kappa_{\max}\}$
    such that the trajectory
    $\sigma(0) \oplus M_D$ is a piece-wise strict $\beta_d$-approximation of~$\sigma$.
\end{lem}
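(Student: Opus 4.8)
The plan is to reduce the claim to a single constituent primitive and then formalize the classical duty-cycling construction of Minhas et al. Since $\sigma$ is decomposable, write $\sigma = \sigma(0)\oplus\{\M_1,\dots,\M_n\}$, where each $\M_j=(\kappa_j,\delta\ell_j,\delta\theta_j)$ is a constant-curvature arc with $\kappa_j\in[0,\kappa_{\max}]$. It suffices to approximate each such arc separately by a sequence whose primitives use only curvatures in $\{0,\kappa_{\max}\}$: if each piece admits a local strict $\beta_d$-approximation, concatenating them yields a piece-wise strict $\beta_d$-approximation of $\sigma$, with the breakpoint sequences $\{s_j\}$ and $\{s_j'\}$ taken to be the cumulative arc-lengths of the pieces. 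First I would record a simplification specific to duty-cycling: the construction preserves insertion length exactly, so $l'=l$ for every piece. Hence condition (i) of the local strict approximation holds with equality, and the remaining conditions (ii)--(iii) collapse to a single requirement, namely the uniform pointwise bound $\sup_{s}\rho(\sigma_j(s),\sigma'_j(s))\le\beta_d$ over the common domain.

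For a fixed piece with curvature $\kappa_j$, I would subdivide its arc of length $\delta\ell_j$ into $N$ equal periods of length $h=\delta\ell_j/N$, and within each period use the duty cycle $\alpha_j=\kappa_j/\kappa_{\max}$: a maximal-curvature primitive $(\kappa_{\max},\alpha_j h,\delta\theta_j)$ followed by a straight primitive $(0,(1-\alpha_j)h,\cdot)$ (the cases $\kappa_j=0$ and $\kappa_j=\kappa_{\max}$ degenerate to a single primitive already in the admissible set). The defining property of this choice is that the turning angle accrued over one period, $\alpha_j h\,\kappa_{\max}=\kappa_j h$, equals the turn of the ideal sub-arc over the same length, so the headings of $\sigma_j$ and $\sigma'_j$ coincide exactly at every period boundary. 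A direct computation on $\SE(3)$ then shows that the positional drift incurred within a single period is $O(h^2)$.

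The heart of the argument---and the step I expect to be the main obstacle---is accumulating these per-period discrepancies into a uniform bound \emph{without} incurring an exponential blow-up in $N$. A naive propagation through the system's Lipschitz constant $L_s$ (Def.~\ref{dft:lipschitz}) would multiply the error by $L_s$ at each of the $N$ periods, and $L_s^N\cdot O(h^2)$ need not vanish. The resolution is to exploit the rigid-body (isometric) structure of $\SE(3)$: because the headings match at period boundaries, the motion applied after period $i$ is a \emph{common} isometry for both trajectories, so it preserves the offset rather than amplifying it. Consequently the endpoint errors add rather than multiply, giving an accumulated positional error of at most $N\cdot O(h^2)=O(\delta\ell_j\,h)$; combined with the $O(h)$ intra-period deviation (each path lies within $O(h)$ of its period's starting configuration), we obtain $\sup_{s}\rho(\sigma_j(s),\sigma'_j(s))=O(h)\to 0$ as $N\to\infty$. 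Choosing $N$ (equivalently, a fine-enough $\delta\ell$ resolution) large enough that this bound drops below $\beta_d$ simultaneously for all $n$ pieces makes every piece a local strict $\beta_d$-approximation; concatenating the corresponding primitive sequences produces the desired finite $M_D$ and completes the proof.
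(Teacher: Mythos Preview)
Your proposal is correct and takes a genuinely different route from the paper. The paper approximates each short sub-arc of $\M_i$ by a \emph{symmetric} three-primitive block with curvatures $(0,\kappa_{\max},0)$, chosen so that the block's start and end configurations coincide \emph{exactly} (position and orientation) with those of the original sub-arc; this eliminates cross-segment accumulation entirely, leaving only an intra-segment Hausdorff bound and a length-overshoot bound to establish (the block is slightly longer than the sub-arc, which is precisely why condition~(i) of Def.~\ref{dft:strict-approx} carries the $(1+\beta)$ factor). Your asymmetric two-primitive $(\kappa_{\max},0)$ duty cycle instead preserves arc length exactly---so condition~(i) holds with equality---but matches only orientation, not position, at period boundaries; you then recover the uniform bound via the additive-accumulation argument: matched orientations mean the fixed per-period body-frame discrepancy $\delta$ (with $|\delta|=O(h^2)$) is carried into the world frame by the rotation $R(q_i)$, an isometry, giving $e_{i+1}=e_i+R(q_i)\delta$ and hence $|e_N|\le N|\delta|=O(h)$. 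Both arguments are valid; the paper trades an accumulation argument for a length argument, and you make the reverse trade. One phrasing point: the motions applied in period $i{+}1$ are not literally a ``common isometry'' for both trajectories (one applies a $\kappa_j$-arc, the other a duty-cycle block); what is common is the body-to-world rotation $R(q_i)$, and that is the isometry your argument actually needs.
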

\begin{proofsketch}
In order to make the connection the approximation factor explicit we provide a proof from a geometric perspective (and not control-based as in the original work by Minhas et al.~\cite{Minhas2007_EMBC}).

As the trajectory $\sigma$ is decomposable, there exists a sequence of motion primitives $M_{\sigma} = \{\M_1,\dots,\M_n\}$ such that 
$\sigma = \sigma(0) \oplus M_{\sigma}$ and each motion primitive $\M_i$ has arbitrary curvature $\kappa_i \in [0, \kappa_{\max}]$.
To approximate $\M_i$, we construct a sequence of motion primitives
$M_i = \{\M_i^{(1)},\dots,\M_i^{(n_i)}\}$ that satisfies
\begin{equation*}
\begin{split}
    &\M_i^{(1)}.\delta\theta = \M_i.\delta\theta, \\
    &\forall j \in [2, n_i], \M_i^{(j)}.\delta\theta = 0, \\
    &\forall j \in [1, n_i], \M_i^{(j)}.\kappa \in \{0, \kappa_{\max}\}.
\end{split}
\end{equation*}
Namely, the first motion primitive $\M_i^{(1)}$ ensures that both trajectories use the same curving plane and the the rest of the sequence stays within this curving plane and approximates the (arbitrary) curvature $\kappa_i$.

We then decompose $\M_i$ into small equal-length segments of length $\ell_i$ (except possibly the last segment) where the specific value of $\ell_i$ is chosen according to the value of  $\beta_d $.
We then use three motion primitives to approximate each of these segments as illustrated in Fig.~\ref{fig:duty-cycling}.
Note that
(i)~the start and end configurations of $\M_i$ and $M_i$ are identical,
(ii)~the two-way Hausdorff distance between $\M_i$ and each $\M_i^{(j)}$ is less than $\beta_d' $ if $\ell_i$ is carefully chosen,
and
(iii)~for each segment with length~$\ell_i$, the length of the three-segment approximation is less than $(1 + \beta_d')\cdot \ell_i$ if $\ell_i$ is carefully chosen.
By carefully choosing $\beta_d'$, we then can make sure the three-segment approximation is a strict $\beta_d$-approximation of the original segment, where $\beta_d' \leq \beta_d$.

\begin{figure}
    \centering
    \includegraphics[width=0.8\linewidth]{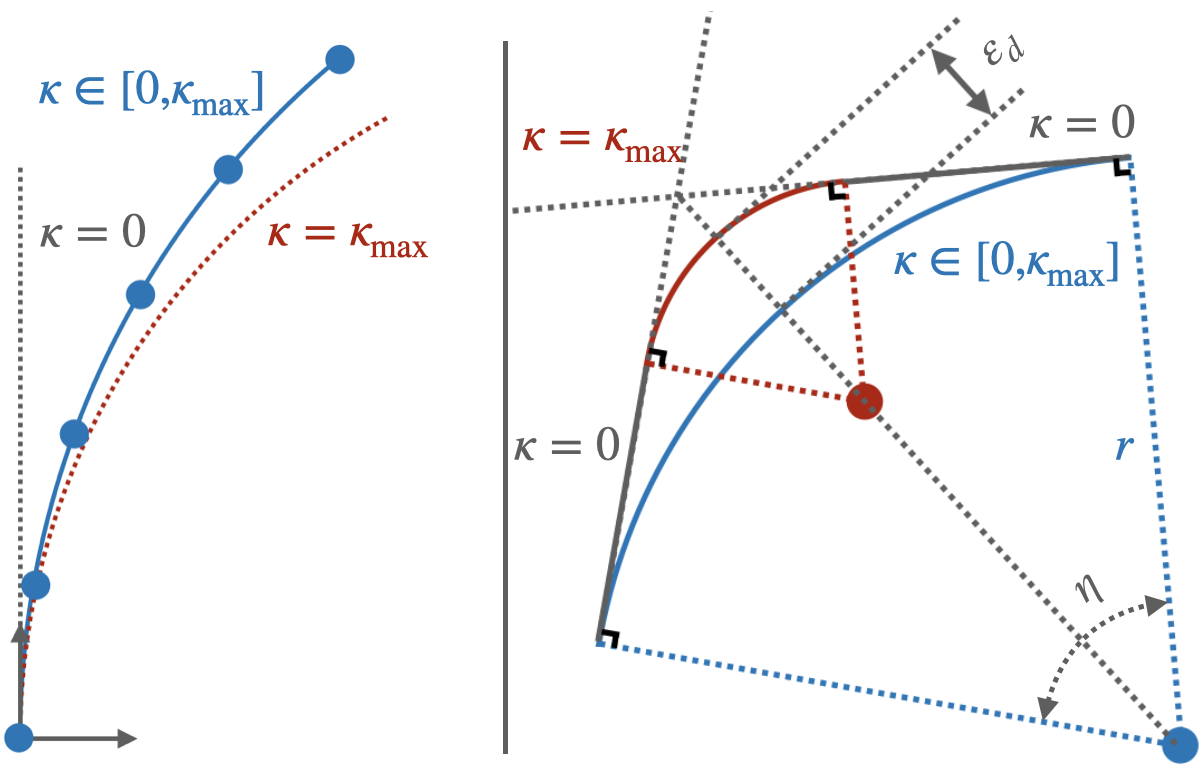}
    \caption{
    \textbf{Illustration of approximation with duty-cycling.}
    \textbf{Left:} Decompose~$\M_i$ into multiple segments with length~$\ell_i$.
    \textbf{Right:} Use three segments to approximate one segment of $\M_i$, where the segments have a curvature of $0, \kappa_{\max}$ and $0$, respectively.
    The two-way Hausdorff distance (the positional part marked as $\varepsilon_d$ in the figure) depends on $\ell_i$.
    For a given $\kappa_{\max}$, to approximate $\M_i$ (with curvature $\kappa$), the shorter $\ell_i$ is, the smaller $\varepsilon_d$ is.
    This is because
    $\varepsilon_d < r\cdot(1/{\rm cos}(0.5\eta) - 1)$,
    where $r = 1/\kappa$ is the radius of curvature and $\eta = \ell_i/r$ is the central angle.
    Since maximum orientation difference along the trajectory is bounded by 
    $0.5\eta$, the two-way Hausdorff distance in configuration space is also bounded.
    Also,
    the trajectory length of the original segment is $r\cdot\eta + \alpha\cdot\eta$,
    and the length of the three-segment trajectory is 
    $\ell_{\rm approx} \leq 2r\cdot{\rm tan}(0.5\eta) + \alpha\cdot\eta$.
    Since $\lim_{\eta \rightarrow 0} 2\cdot{\rm tan}(0.5\eta)/\eta = 1$,
    the trajectory length ratio approaches $1$ when $\eta$ approaches $0$.
    This means the trajectory length can be approximated arbitrarily well.
    }
    \label{fig:duty-cycling}
\end{figure}

Let  
$M_\sigma^{\beta_d} = M_1 \cdot M_2 \cdot \ldots \cdot M_n$ be this sequence of all the newly constructed motion primitives.
Then it is straightforward that
$\sigma(0) \oplus M_\sigma^{\beta_d}$ is a piece-wise strict $\beta_d$-approximation of $\sigma$.
\end{proofsketch}

\subsubsection{Approximating curves using fixed-resolution primitives}
\label{subsec:fixed-resolution-primitives}
Next, we refine Lem.~\ref{lem:duty-cycling} by showing $\sigma$ can be approximated by fixed-resolution primitives. 

\begin{lem}[Fixed-resolution trajectory approximation]
\label{lem:resolution-existance}
    Let~$\sigma$ be a decomposable trajectory
    and let~ $\beta_r > 0$ be some real value.
    If the system is Lipschitz continuous (Def.~\ref{dft:lipschitz}), 
    there exists a fine resolution $R(\sigma, \beta_r) = \{r_{\ell}, r_{\theta}\}$
    and a finite sequence of motion primitives 
    $M_{R(\sigma, \beta_r)}$
    such that  
    $\sigma(0) \oplus M_{R(\sigma, \beta_r)}$
    is a piece-wise strict $\beta_r$-approximation of~$\sigma$.
    Moreover $M_{R(\sigma, \beta_r)} \subseteq \M_{\rm fs}(R(\sigma, \beta_r), K_\sigma)$,
    where $K_\sigma$ is the set of curvatures that appear along $\sigma$.
\end{lem}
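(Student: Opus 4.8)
The plan is to approximate $\sigma$ primitive-by-primitive, exploiting the fact that subdividing a fixed-curvature arc into equal-length pieces is \emph{exact}. Write $\sigma = \sigma(0)\oplus\{\M_1,\dots,\M_n\}$ with $\M_i = (\kappa_i,\delta\ell_i,\delta\theta_i)$ and $\kappa_i\in K_\sigma$, producing the breakpoint configurations $\mathbf{x}_0,\dots,\mathbf{x}_n$ with $\mathbf{x}_i = \mathbf{x}_{i-1}\oplus\M_i$. For a candidate resolution $R=\{r_\ell,r_\theta\}$, I would round each primitive to $\M_i' = (\kappa_i, m_i r_\ell, n_i r_\theta)$, where $m_i=\lfloor \delta\ell_i/r_\ell\rfloor$ and $n_i = \mathrm{round}(\delta\theta_i/r_\theta)\in[0,\lfloor 2\pi/r_\theta\rfloor]$. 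The crucial observation is that $\M_i'$ is realized \emph{exactly} by the finite sequence of finest-set primitives consisting of $(\kappa_i, r_\ell, n_i r_\theta)$ followed by $m_i-1$ copies of $(\kappa_i, r_\ell, 0)$: the first applies the (rounded) curving rotation and the remainder continue the same circular arc without re-steering. Concatenating these sequences over $i$ yields $M_{R(\sigma,\beta_r)}\subseteq M_{\rm fs}(R,K_\sigma)$ (Def.~\ref{dft:finest_set}), and reduces the problem to comparing, chunk by chunk, the single rounded primitive $\M_i'$ against the original $\M_i$.

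Next I would bound the per-chunk error and propagate it. Applying Def.~\ref{dft:action-distance}, the primitive distance $\rho_\A(\M_i',\M_i)$ is controlled by the angular error $|\delta\theta_i - n_i r_\theta|\le r_\theta/2$ and the length error $|\delta\ell_i - m_i r_\ell| < r_\ell$; both vanish as $r_\ell,r_\theta\to 0$, so $d_i := \rho_\A(\M_i',\M_i) \to 0$. Let $\tilde{\mathbf{y}}_i$ denote the breakpoint configurations of the approximating trajectory and set $\epsilon_i = \rho(\tilde{\mathbf{y}}_i,\mathbf{x}_i)$ with $\epsilon_0 = 0$. Because the subdivided chunk reproduces $\M_i'$ exactly, I can apply Lipschitz continuity (Def.~\ref{dft:lipschitz}) \emph{once per original primitive}:
\[
\epsilon_i = \rho\big(\tilde{\mathbf{y}}_{i-1}\oplus\M_i',\ \mathbf{x}_{i-1}\oplus\M_i\big) \le L_s\big(\epsilon_{i-1} + d_i\big),
\]
which unrolls to $\epsilon_i \le \sum_{j\le i} L_s^{i-j+1} d_j \le L_s^{n}\sum_{j} d_j$. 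I would also record the analogous bound for a \emph{partial} application of $\M_i'$, so that the deviation is controlled not only at the breakpoints but at every arc-length parameter within each chunk.

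I would then verify the three conditions of Def.~\ref{dft:strict-approx} on each piece. Condition~(i) holds because $m_i r_\ell$ differs from $\delta\ell_i$ by less than $r_\ell$, so taking $r_\ell$ below $\beta_r\cdot\min_i\delta\ell_i$ keeps the length ratio within $1+\beta_r$; conditions~(ii)--(iii) follow by combining the propagated breakpoint error $\epsilon_{i-1}$ (pushed through a partial arc via Def.~\ref{dft:lipschitz}) with the intra-chunk deviation between the $\M_i$- and $\M_i'$-arcs that $\rho_\A$ already bounds, and by noting that the short length remainder places $\tilde{\mathbf{y}}_i$ within $O(r_\ell)$ of $\mathbf{x}_i$. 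Choosing $r_\ell,r_\theta$ small enough that $L_s^{n}\sum_i d_i \le \beta_r$ makes every piece a local strict $\beta_r$-approximation, and hence $\sigma(0)\oplus M_{R(\sigma,\beta_r)}$ a piece-wise strict $\beta_r$-approximation of $\sigma$ by Def.~\ref{dft:piecewise-strict-approx}.

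The main obstacle is error accumulation. A naive subdivision argument would invoke Def.~\ref{dft:lipschitz} once per finest-set primitive, of which there are $H\approx \ell_\sigma/r_\ell$; since $L_s$ is a fixed global constant that is in general $>1$, this yields a factor $L_s^{H}$ that \emph{diverges} as the resolution refines ($r_\ell\to 0$), overwhelming the $O(r_\ell+r_\theta)$ per-step errors and blocking the proof. The resolution is precisely the exactness of arc subdivision noted above: because a chunk of $m_i$ equal-length, zero-re-steering primitives equals the single primitive $\M_i'$, Lipschitz continuity need only be applied $n$ times, where $n$ is the \emph{fixed} number of primitives in the decomposition of $\sigma$ and is independent of the resolution. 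This replaces the divergent $L_s^{H}$ by the benign constant $L_s^{n}$, after which driving $r_\ell,r_\theta$ to zero sends the total error below $\beta_r$.
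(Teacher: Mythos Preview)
Your proposal is correct and follows essentially the same approach as the paper: both round each original primitive $\M_i$ to a single grid-aligned primitive $\tilde{\M}_i=(\kappa_i,m_i r_\ell,n_i r_\theta)$, exploit the exactness of arc subdivision to realize $\tilde{\M}_i$ as a finite sequence from $M_{\rm fs}(R,K_\sigma)$ (one primitive carrying the rotation followed by zero-rotation pieces of the same curvature), and then apply Lipschitz continuity once per original primitive to obtain the benign factor $L_s^{n}$ rather than the divergent $L_s^{H}$. Your treatment is in fact slightly more explicit than the paper's proof sketch about the intra-chunk deviation and about verifying the length condition~(i) of Def.~\ref{dft:strict-approx}.
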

\begin{proofsketch}
The following is adapted from~\cite[Appendix A]{Barraquand1991_IJRR}).
The trajectory~$\sigma$ is decomposable, thus there exists a finite sequence of  motion primitives $M_\sigma = \{\M_1,\dots, \M_n\}$ such that $\sigma = \sigma(0) \oplus M_\sigma$.
Set $K_\sigma = \bigcup_i \M_i.\kappa$ to be the set of all curvatures that appear in $M_\sigma$.

To approximate each motion primitive~$\M_i$ using primitives from the finest set of motion primitives $\M_{\rm fs}(R(\sigma, \beta_r), K_\sigma)$ (Def.~\ref{dft:finest_set}), we construct a sequence motion primitive~$M_i = \{\M_i^{(1)},\dots\M_i^{(n_i)}\}$,
where 
\begin{equation*}
    \begin{split}
        &\M_i^{(1)}.\delta\theta = k_i\cdot r_{\theta},\\ &\forall j \in [2, n_i], \M_i^{(j)}.\delta\theta = 0, \\
        &\forall j \in [1, n_i], \M_i^{(j)}.\kappa = \M_i.\kappa, M_i^{(j)}.\delta\ell = r_{\ell}.
    \end{split}
\end{equation*}

Similar to the sequence constructed for Lem.~\ref{lem:duty-cycling}, the first motion primitive $\M_i^{(1)}$ accounts for the curving plane (though here it can only be approximated)
and the the rest of the sequence stays within this curving plane and accounts for the length of the circular arc the trajectory follows in this plane.
Applying the sequence~$M_i$ is equivalent to applying one motion primitive
$\tilde{\M}_i = (\M_i.\kappa, n_i\cdot r_\ell, k_i\cdot r_\theta)$.
Thus, by carefully choosing~$r_\ell$ and~$r_\theta$, 
the distance between 
$\M_i$
and
$\tilde{\M}_i$
(see Def.~\ref{dft:action-distance})
can be arbitrarily small.

This is done for every motion primitive $\M_i$.
As $M$ is a finite sequence of size $n$, for any $\varepsilon > 0$ we can always find a fine-enough resolution $\{r_\ell, r_\theta\}$ that ensures that
$$\rho_{\A}(\M_i, \tilde{\M}_i) < \varepsilon, \forall i \in [1, n].$$
\noindent
This is because, given that both motion primitives have equal curvature, 
$\rho_{\A}(\M_1, \M_2) < |\delta\theta_1 - \delta\theta_2|\cdot{\min}\{\delta\ell_1, \delta\ell_2\} + |\delta\ell_1 - \delta\ell_2|
+ \alpha\big(|\delta\theta_1 - \delta\theta_2| + \frac{|\delta\ell_1 - \delta\ell_2|}{\M_i.\kappa}\big)
$,
where
$\delta\ell_i = \M_i.\delta\ell$ and $\delta\theta_i = \M_i.\delta\theta$.
The above upper bound for the  action-space distance accounts for both position and orientation.
See Fig.~\ref{fig:action-distance} for illustration.

\begin{figure}
    \centering
    \includegraphics[width=\linewidth]{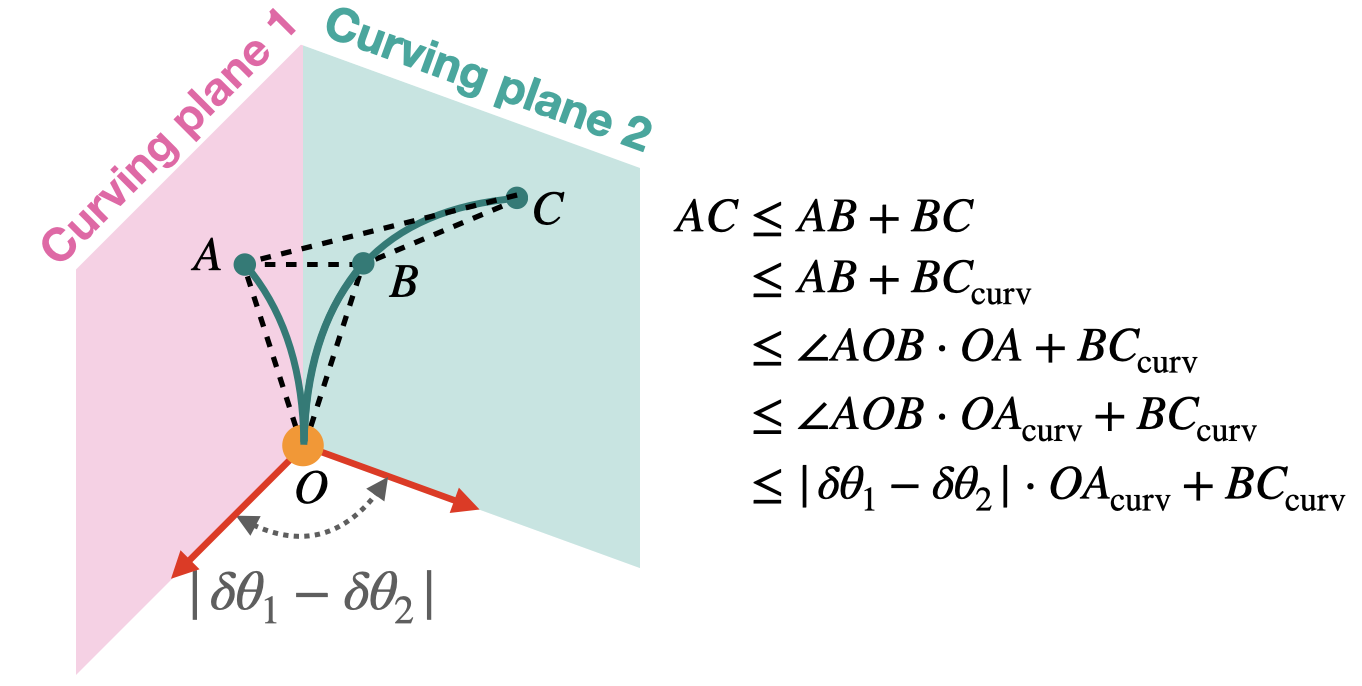}
    \caption{Illustration of the action distance between two motion primitives with the same curvature.
    Here the shorter motion primitive lies in curving plane 1, thus 
    ${\min}\{\delta\ell_1, \delta\ell_2\} = OA_{\rm curv}$
    and
    $|\delta\ell_1 - \delta\ell_2| = OC_{\rm curv} - OA_{\rm curv} = BC_{\rm curv}$.
    }
    \label{fig:action-distance}
\end{figure}

Since the system is Lipschitz continuous,
\begin{equation*}
\begin{split}
    &\rho(\sigma(0) \oplus \M_1 \dots \oplus \M_n, \sigma(0) \oplus \tilde{\M}_1 \dots \oplus \tilde{\M}_n) \\
    \leq &L_s(\rho(\sigma(0) \oplus \M_1 \dots \oplus \M_{n-1}, \sigma(0) \oplus \tilde{\M}_1 \dots \oplus \tilde{\M}_{n-1}) \\
    &+ \rho_{\A}(\M_n, \tilde{\M}_n) \\
    \leq & L_s^n\cdot \rho(\sigma(0), \sigma(0))
    + \sum_{i = 1}^{n}{L_s^{n-i+1} \cdot \rho_{\A}(\M_i, \tilde{\M}_i)}\\
    <& \varepsilon \cdot \frac{L_s(L_s^n - 1)}{L_s - 1}.
\end{split}
\end{equation*}
Thus, to ensure that $\sigma(0) \oplus \{\tilde{\M}_1, \dots, \tilde{\M}_n\}$ is a piece-wise strict 
$\beta_r$-approximation of $\sigma$, we only need to ensure that 
$\varepsilon \leq \frac{\beta_r(L_s-1)}{L_s(L_s^n - 1)}$.
As both $n$ and $L_s$ are fixed, we can choose $\varepsilon$ to be as small as needed thus the desired fine resolution exists which concludes the proof.
\end{proofsketch}

Having established Lem.~\ref{lem:resolution-existance}, we can finalize the first part of Thm.~\ref{thm:decomposition}. Namely, we carefully set $\beta_d$ and $\beta_r$, so that the final result is a piece-wise strict $\beta$-approximation.

Set $\beta_d = \beta_r = \sqrt{1 + \beta} -1$. According to Lem.~\ref{lem:duty-cycling}, 
there exists a finite sequence of motion primitives $M_D$ in which every element has curvature $\kappa \in \{0, \kappa_{\max}\}$ such that the trajectory $\sigma_d = \sigma(0) \oplus M_D$ is a piece-wise strict $\beta_d$-approximation of~$\sigma$.

Note that by construction $\sigma_d$ is decomposable.
Thus, according to Lem.~\ref{lem:resolution-existance}, 
there exists a fine resolution $R(\sigma, \beta_r) = \{r_{\ell}, r_{\theta}\}$ and a finite sequence of motion primitives  $M_{R(\sigma, \beta_r)}$ such that $\sigma_r = \sigma(0) \oplus M_{R(\sigma, \beta_r)}$ is a piece-wise strict $\beta_r$-approximation of $\sigma_d$.
Moreover, $M_{R(\sigma, \beta_r)} \subseteq \M_{\rm fs}(R(\sigma, \beta_r), \{0, \kappa_{\max}\})$ as the construction in the proof of Lem.~\ref{lem:resolution-existance} does not add new curvatures.

Finally, as $\beta_d = \beta_r = \sqrt{1 + \beta} -1$, the trajectory~$\sigma_r$ is a piece-wise strict $\beta$-approximation of $\sigma$.
This is because for every step above, we use segments of shorter lengths for the approximation, thus $\sigma_r$ and $\sigma$ satisfy $(\beta_d + \beta_r)$ point-pair-wise distance (condition (ii) and (iii)) in Def.~\ref{dft:strict-approx}.
So we only need to take care of the first condition in Def.~\ref{dft:strict-approx}.
Having $\beta_d = \beta_r = \sqrt{1 + \beta} -1$ would provide us with
$(1 + \beta_d)(1 + \beta_r) = 1 + \beta$, thus the trajectory length is also bounded, making every segments in $\sigma_r$ local strict $\beta$-approximations of the corresponding segments in $\sigma$.
By definition, $\sigma_r$ is a piece-wise strict $\beta$-approximation of $\sigma$.

\subsubsection{Similar cost for piece-wise strict approximation}
We finish Thm.~\ref{thm:decomposition} by showing that the approximation of $\sigma$ also achieves a desirable solution cost. 

\begin{lem}[Similar cost for strict approximation]
\label{lem:similar-cost}
    If a collision-free trajectory $\sigma'$ is a local strict $\beta$-approximation of another collision-free trajectory $\sigma$,
    and the cost function $\C$ is well-behaved (characterized with $L_c, c_{\min}, c_{\max}$),
    then we have
    $\C(\sigma') \leq (1 + k\cdot\beta) \cdot \C(\sigma)$,
    where $k = \frac{L_c + c_{\max}}{c_{\min}}$.
\end{lem}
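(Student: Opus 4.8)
The plan is to bound $\C(\sigma') = \int_0^{l'} c(\sigma'(s))\,\mathrm{d}s$ directly against $\C(\sigma)$ by exploiting the point-wise structure of the local strict $\beta$-approximation. First I would split the domain of integration at $\min(l,l')$, mirroring the two regimes of Def.~\ref{dft:strict-approx}: the \emph{overlap} interval $[0,\min(l,l'))$, on which condition~(ii) gives $\rho(\sigma(s),\sigma'(s)) \le \beta$, and the \emph{tail} interval $[\min(l,l'),l']$, which is nonempty only when $l' > l$ and on which condition~(iii) gives $\rho(\sigma(l),\sigma'(s)) \le \beta$. Treating these two pieces with different tools is the heart of the argument.

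On the overlap, the Lipschitz continuity of the well-behaved cost $c$ yields the key point-wise estimate $c(\sigma'(s)) \le c(\sigma(s)) + L_c\beta$, so integrating over $[0,\min(l,l'))$ contributes at most $\C(\sigma) + L_c\beta\,l$ (here I use $c \ge c_{\min} > 0$, so truncating the integral of $c(\sigma(\cdot))$ at $\min(l,l') \le l$ can only decrease it). On the tail I would simply invoke the uniform upper bound $c(\sigma'(s)) \le c_{\max}$ and bound the tail length via condition~(i): since $l' \le (1+\beta)l$ we have $l' - l \le \beta l$, so the tail contributes at most $c_{\max}\beta\,l$. Summing the two pieces gives $\C(\sigma') \le \C(\sigma) + (L_c + c_{\max})\beta\,l$.

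The final step converts this additive error, which scales with the \emph{length} $l$, into the desired multiplicative error scaling with the \emph{cost} $\C(\sigma)$. For this I would use $\C(\sigma) = \int_0^l c(\sigma(s))\,\mathrm{d}s \ge c_{\min}\,l$, i.e. $l \le \C(\sigma)/c_{\min}$, which turns the bound into $(L_c + c_{\max})\beta\,l \le k\beta\,\C(\sigma)$ with $k = (L_c + c_{\max})/c_{\min}$, yielding $\C(\sigma') \le (1 + k\beta)\C(\sigma)$ as claimed. I expect the only delicate point to be the tail interval: because $\sigma'$ may be strictly longer than $\sigma$, there is no matching portion of $\sigma$ against which to compare $\sigma'$ point-wise there, so the cheap Lipschitz estimate is unavailable and one must fall back on the global bound $c_{\max}$. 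This is precisely why $c_{\max}$ (and not merely $L_c$) enters the constant $k$, and why the hypothesis $c_{\min} > 0$ is indispensable: without it the length-to-cost conversion in the last step would break down.
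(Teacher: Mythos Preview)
Your proposal is correct and follows essentially the same approach as the paper's proof: split the integral into an overlap part (handled via Lipschitz continuity of $c$) and a tail part (handled via the bound $c\le c_{\max}$ together with $l'-l\le\beta l$), then convert the additive $\beta l$ error into a multiplicative one using $\C(\sigma)\ge c_{\min}l$. If anything, your version is slightly more careful than the paper's, since you split at $\min(l,l')$ and thus handle the case $l'<l$ cleanly, whereas the paper splits at $l$ and tacitly assumes $l'\ge l$.
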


\begin{proof}
We have
\begin{equation*}
    \begin{split}
        \C(\sigma') &= \int_0^{l'}c(\sigma'(s))ds \\
        & = \int_0^{l} c(\sigma'(s))ds + \int_l^{l'} c(\sigma'(s))ds \\
        & \leq \int_0^{l} \bigg( c(\sigma(s)) + L_c \cdot \beta\bigg)ds + \beta\cdot l \cdot c_{\max} \\
        & = \int_0^{l} \bigg( c(\sigma(s)) + L_c \cdot \beta + c_{\max} \cdot \beta \bigg)ds \\
        & = \int_0^{l} \bigg( 1 + \frac{\beta(L_c + c_{\max})}{c(\sigma(s))} \bigg) \cdot c(\sigma(s)) ds \\
        & \leq \int_0^{l} \bigg( 1 + \frac{\beta(L_c + c_{\max})}{c_{\min}} \bigg) \cdot c(\sigma(s)) ds \\
        & = \bigg( 1 + \frac{\beta(L_c + c_{\max})}{c_{\min}} \bigg) \cdot \int_0^{l} c(\sigma(s)) ds \\
        & = \bigg( 1 + \frac{\beta(L_c + c_{\max})}{c_{\min}} \bigg) \cdot \C(\sigma).
    \end{split}
\end{equation*}
\end{proof}

Apparently, if every piece of sub-trajectory is bounded, the sum of cost of all pieces is then also bounded.
Thus, if a trajectory $\sigma'$ is a piece-wise strict $\beta$-approximation of $\sigma$, then for a well-behaved cost, we also have
$\C(\sigma') \leq (1 + k\cdot\beta)\cdot\C(\sigma)$.

\subsection{Proof of Thm.~\ref{thm:resolution_optimal}}
We are in a position to complete the proof of Thm.~\ref{thm:resolution_optimal}. Similarly to the proof for resolution completeness~\cite{Fu2021_RSS}, we first develop the optiamlity proof for a simplified version of \ros, termed \rosnr, 
which does not use node pruning as part of duplicate detection (line~7 in Alg.~\ref{alg:shared}) and later extend it to \ros.  For simplicity, we assume that both \ros and \rosnr do not use the additional optimizations described in Sec.~\ref{sec:domain-opt} or \ref{sec:appendix_optimization}, which do not affect the validity of arguments used below.

\subsubsection{Resolution optimality of \rosnr}
Before we begin with the proof we mention that \rosnr terminates in finite time, which directly follows from our previous work~\cite[Thm. 1]{Fu2021_RSS}.

As a first step towards showing that Thm.~\ref{thm:resolution_optimal} holds for \rosnr, we consider a reference trajectory $\sigma^*$ and assume that conditions \textbf{C1}, \textbf{C2}, and \textbf{C3} are satisfied. According to \textbf{C1}, \textbf{C2}, and Thm~\ref{thm:decomposition}, as $\sigma^*$ is decomposable, for some $\beta>0$, there exists a fine resolution $R(\sigma^*, \beta)$ with which a piece-wise strict $\beta$-approximation of $\sigma^*$ can be constructed (Thm.~\ref{thm:decomposition}).
We denote such piece-wise strict $\beta$-approximation as $\sigma^*_{\beta}$. Moreover, it holds that 
$\C(\sigma_{\beta}^*) \leq (1 + k\cdot\beta)\cdot\C(\sigma^*)$.
Recall (Thm.~\ref{thm:resolution_optimal}, \textbf{C2}) that $k = \frac{L_c + c_{\max}}{c_{\min}} $.
Furthermore, we mention that the precise value of $\beta$ will be assigned later on and for now we only assume that $\beta \in (0, \frac{\delta}{2}]$.

Next, we show that $\sigma_{\beta}^*$ is valid and satisfies the desired goal tolerance, which implies that \rosnr will be able to find it. 
According to condition \textbf{C3}, $\sigma^*$ is $\delta$-robust. This implies that $\sigma_{\beta}^*$ is at least $(\delta - \beta)$-robust.
Given that $\beta \leq \frac{\delta}{2}$, we further have $\sigma_{\beta}^*$ is $\frac{\delta}{2}$-robust.
Thus for a cutoff resolution $R_{\min}$ that is fine enough, $\sigma_{\beta}^*$ will be explored by the search tree constructed by \rosnr.\footnote{To be more precise, one needs to account for the cases where $R(\sigma^*, \beta)$ is not in the sequence of resolutions considered by the algorithm and we may introduce an additional error when approximating $R(\sigma^*, \beta)$ with~$R_{\min}$. 
However, this can be easily accounted for by using a finer resolution to approximate the target resolution $R(\sigma^*, \beta)$, similarly to Lem.~\ref{lem:resolution-existance}.}

\subsubsection{Accounting for pruning}
Next, we describe how to account for pruning. 
Denote the sequence of motion primitives used to construct  $\sigma_{\beta}^*$ as 
$M_{\sigma_{\beta}^*} = \{\M_1,\dots,\M_n\}$.
When 
$M_{\sigma_{\beta}^*}$ is sequentially applied to 
$\mathbf{x}_{\rm start}$, 
we obtain a sequence of configurations
$\{\mathbf{x}_0,\mathbf{x}_1,\dots,\mathbf{x}_n\}$, 
where $\mathbf{x}_0 = \mathbf{x}_{\rm start}, \mathbf{x}_i = \mathbf{x}_{i-1} \oplus \M_i, i \in [1, n]$.
For the rest of the proof, we use $M_{\sigma_{\beta}^*}[i,j] = \{\M_i, \dots, \M_j\}$ to denote a subsequence of $M_{\sigma_{\beta}^*}$.
We also use $\mathbf{x} + M_{M_{\sigma_{\beta}^*}}[i,j]$ to denote the configuration after sequentially applying $\{\M_i, \dots, \M_j\}$ to $\mathbf{x}$.

If we run \rosnr, every configuration $\mathbf{x}_i$ will be explored and $\sigma_{\beta}^*$ will be constructed when the search terminates.
However, if we run \ros, we prune nodes using duplicate detection.
Nevertheless, we now prove that, with similar node pruning, the search tree built with \ros will explore a piece-wise strict $\frac{\delta}{2}$-approximation of $\sigma_{\beta}^*$, which implies that \ros  returns a valid solution with sufficient goal tolerance. This will be done by showing that the same sequence of motion primitives $M_{\sigma_{\beta}^*}$ can be applied to configurations that are ``similar'' to $\mathbf{x}_0 \ldots \mathbf{x}_n$ and the resultant plan $\tilde{\sigma}$ with low cost exists using the fact that $\tilde{\sigma}$ is ``similar'' to $\sigma_{\beta}^*$ and that $\sigma_{\beta}^*$ has $\frac{\delta}{2}$-clearance.
The rest of this proof formalizes this idea.

Recall that in \textbf{C4}, we require $d_{\rm sim} < \frac{2}{\kappa_{\max}}\sin{\frac{\kappa_{\max}\delta\ell_{\min}}{2}}$, which is the minimum positional difference between a node and its successor.
This condition in \textbf{C4} guarantees that any successor node is not pruned by its parent node, which keeps the tree expanding.
Now, let $\mathbf{x}_i$ be the first configuration that is pruned because of a similar configuration (see Alg.~\ref{alg:shared}, line~\ref{line:similar-node}). 
We will say that $\mathbf{x}_i$ is \emph{replaced} by the similar configuration~$\mathbf{x}'_i$.
We first consider the setting $i=1$, and apply $M_{\sigma'}[2,n]$ to $\mathbf{x}'_1$.
According to condition \textbf{C4}, the maximal error accumulated to $\mathbf{x}'_n = \mathbf{x}'_1 + M_{\sigma'}[2,n]$ is  
$\xi_1 = \rho(\mathbf{x}'_n, \mathbf{x}_n) = L_s^{n - 1}\cdot d_{\rm sim}$.
Similarly, when $\mathbf{x}'_2$ is replaced by~$\mathbf{x}''_2$, we apply $M_{\sigma'}[3,n]$ to $\mathbf{x}''_2$ and for $\mathbf{x}''_n = \mathbf{x}''_2 + M_{\sigma'}[3,n]$, the accumulated error is
$\xi_2 = \rho(\mathbf{x}''_n, \mathbf{x}'_n) = L_s^{n - 2}\cdot d_{\rm res}$.
The same analysis applies for $\{\mathbf{x}_3,\dots,\mathbf{x}_n\}$.
In the worst case, $\x_n$ can be replaced $n$ times, which leads to the total accumulated error of
\begin{equation*}
\begin{split}
    \xi 
    &= \rho(\mathbf{x}^{(n)}_n, \mathbf{x}_n) 
    \leq \rho(\mathbf{x}'_1, \mathbf{x}_1) + \dots + \rho(\mathbf{x}^{(n)}_n, \mathbf{x}^{(n-1)}_n)   \\
    &= \xi_1 + \dots + \xi_n 
    = \frac{L_s^n - 1}{L_s -1}\cdot d_{\rm sim} \\
    &< \frac{\delta}{2} \cdot \frac{L_s^n - 1}{L_s^H -1} \leq \frac{\delta}{2}.
\end{split}
\end{equation*}
 
Next we show that the solution $\tilde{\sigma}$ represented by the sequence of configurations $\x_0,\x'_1,\ldots,\x_n^{(n)}$ satisfied goal tolerance (we show that $\tilde{\sigma}$ is valid after bounding its below). Indeed, due to the robustness of $\sigma_{\beta}^*$, we have that $\|{\rm Prog}(\mathbf{x}_n) - g_{\rm goal}\|_2 < \tau - \frac{\delta}{2}$. Thus,
\begin{equation*}
\begin{split}
    &\|{\rm Proj}(\mathbf{x}^{(n)}_n) - p_{\rm goal}\|_2 \\
    &\leq \|{\rm Proj}(\mathbf{x}^{(n)}_n) - {\rm Proj}(\mathbf{x}_n)\|_2 + \|{\rm Proj}(\mathbf{x}_n) - p_{\rm goal}\|_2 \\
    &< \tau - \frac{\delta}{2} + \frac{\delta}{2} = \tau.
\end{split}
\end{equation*}
This implies that even in the worst case where all possible replacements happen, the final configuration $\mathbf{x}^{(n)}_n$ still satisfies the required goal tolerance (see Fig.~\ref{fig:completeness}).

\subsubsection{Incorporating solution cost and proving validity}
Next, we consider the cost of $\tilde{\sigma}$.
Note that we only allow $\mathbf{x}_i'$ to prune $\mathbf{x}_i$ when 
$\C(\mathbf{x}_i^{(i)}) \leq \C(\mathbf{x}_i^{(i-1)})$.
Thus, for the final configuration along $\tilde{\sigma}$ we have 
\begin{equation*}
\begin{split}
    \C(\mathbf{x}_{n}^{(n)})
    & \leq \C(\mathbf{x}_{n-1}^{(n-1)}) + \C(\mathbf{x}_{n-1}^{(n-1)}, \mathbf{x}_{n}^{(n)})\\
    & \leq \sum_{i=1}^n {\C(\mathbf{x}_{i-1}^{(i-1)}, \mathbf{x}_{i}^{(i)})}.
\end{split}
\end{equation*}

It remains to bound the expression $\C(\mathbf{x}_{i-1}^{(i-1)}, \mathbf{x}_{i}^{(i)})$ for any $1\leq i\leq n$. 
Denote by $\sigma^*_\beta(\x_{i-1},\x_i)$ the trajectory segment from $\x_{i-1}$ and $\x_i$ along $\sigma^*_\beta$. Similarly define $\tilde{\sigma}(\x_{i-1}^{(i-1)},\x_i^{(i)})$.
We now show that $\tilde{\sigma}(\x_{i-1}^{(i-1)},\x_i^{(i)})$ is local $\beta_i$-strict approximation of $\sigma^*_\beta(\x_{i-1},\x_i)$
for 
$\beta_i = \frac{L_s^i - 1}{L_s - 1}\cdot d_{\rm sim}$.
To see that, first note that both trajectory segments use the same motion primitive and have the same length. Additionally, 
$\rho(\mathbf{x}_i^{(i)}, \mathbf{x}_i) \leq \frac{L_s^i - 1}{L_s - 1}\cdot d_{\rm sim}$.
Finally, An intermediate state $\mathbf{x}'$ along the edge is also close to the corresponding state $\mathbf{x}$ along the original edge, since they can be obtained by applying a motion primitive of a shorter length, and Lipshitz continuity of the system guarantees
$\rho(\mathbf{x}', \mathbf{x}) \leq \frac{L_s^i - 1}{L_s - 1}\cdot d_{\rm sim}$. Thus, since $\beta_i\leq \beta_n$,
these local $\beta_i$-strict approximations are also local $\beta_n$-strict approximations.

According to the similar-cost property of local strict approximation, we have that 
\begin{equation*}
    \C(\mathbf{x}_{i-1}^{(i-1)}, \mathbf{x}_{i}^{(i)}) \leq (1 + k\beta_n)\cdot\C(\mathbf{x}_{i-1}, \mathbf{x}_{i}),
\end{equation*}
where $k$ is as defined in \textbf{C2}.
To summarize, the accumulated cost of $\tilde{\sigma}$ is as follows:
\begin{equation*}
\begin{split}
    \C(\mathbf{x}_{n}^{(n)})
    & \leq \sum_{i=1}^n {\C(\mathbf{x}_{i-1}^{(i-1)}, \mathbf{x}_{i}^{(i)})}\\
    & \leq (1 + k\beta_n)\cdot\sum_{i=1}^n {\C(\mathbf{x}_{i-1}, \mathbf{x}_{i})} \\
    & = (1 + k\beta_n)\cdot \C(\sigma_{\beta}^*) \\
    & \leq \left(1 + k\cdot\frac{\delta}{2}\right)\cdot \C(\sigma_{\beta}^*) \\
    & \leq \left(1 + k\cdot\frac{\delta}{2}\right) (1 + k\beta)\cdot \C(\sigma^*) \\
    &= \left(1 + k\cdot\frac{\delta}{2} + k\cdot\beta + k^2\cdot\frac{\delta\cdot\beta}{2}\right)\cdot\C(\sigma^*).
\end{split}
\end{equation*}

\begin{figure}
    \centering
    \includegraphics[width=\linewidth]{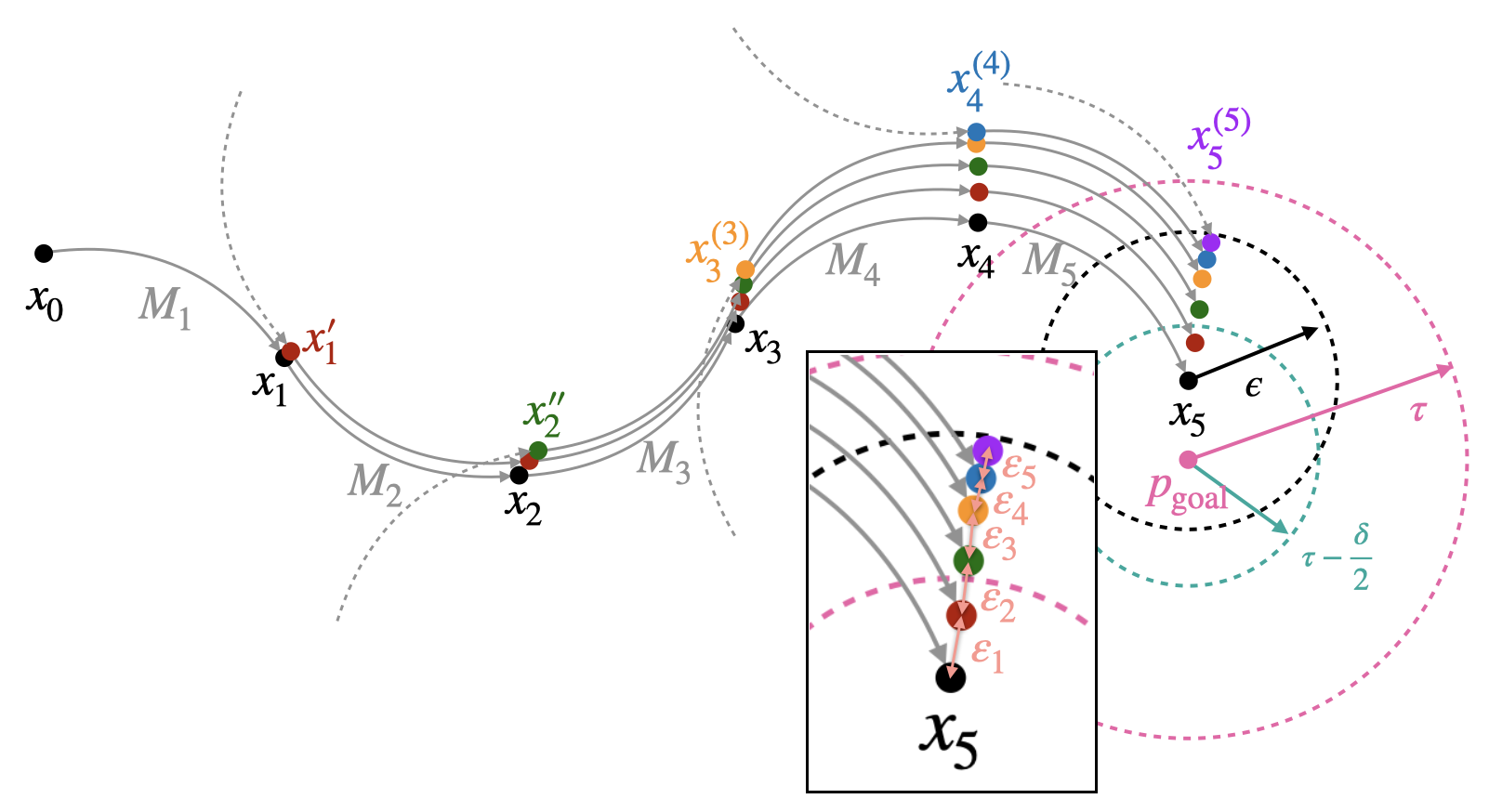}
    \caption{A 2D illustration of configuration pruning. 
    $\sigma_{\beta}^*$ is shown as black nodes, 
    the plan after~$\mathbf{x}'_1$ prunes~$\mathbf{x}_1$ is shown as red nodes,
    the plan after~$\mathbf{x}''_2$ prunes~$\mathbf{x}'_2$ is shown as green nodes,
    the plan after~$\mathbf{x}^{(3)}_3$ prunes~$\mathbf{x}''_3$ is shown as yellow nodes,
    the plan after~$\mathbf{x}^{(4)}_4$ prunes~$\mathbf{x}^{(3)}_4$ is shown as blue nodes,
    and the pruning configuration $\mathbf{x}^{(5)}_5$ is shown as a purple node.
    The solid circular arrows represent elements in $M_{\sigma_{\beta}^*}$, and the dashed circular arrows represent connections to predecessors of the pruning configurations.
    }
    \label{fig:completeness}
\end{figure}

It remains to determine the value of $\beta$ to achieve a desired approximation factor of $1+\varepsilon$. So far we required $\beta \leq \frac{\delta}{2}$.
To further guarantee  that
$\C(\mathbf{x}_n^{(n)}) \leq (1 + \varepsilon)\cdot\C(\sigma^*)$ holds, we further require that
$\beta \leq \frac{2\varepsilon - k\cdot\delta}{k(2 + k\cdot\delta)}$.
Thus we take
$\beta = {\min}\{\frac{\delta}{2}, \frac{2\varepsilon - k\cdot\delta}{k(2 + k\cdot\delta)}\}$.
According to condition \textbf{C3} it follows that, $\delta \leq \frac{\varepsilon}{k}$, so we always have $\beta > 0$.

It remains to show that $\tilde{\sigma}$ is valid. 
We have shown above that 
$\tilde{\sigma}(\x_{i-1}^{(i-1)},\x_i^{(i)})$ is local $\beta_n$-strict approximation of $\sigma^*_\beta(\x_{i-1},\x_i)$. 
This implies that for any configuration~$\mathbf{x}'$ along $\tilde{\sigma}(\x_{i-1}^{(i-1)},\x_i^{(i)})$ there exists some corresponding configuration~$\mathbf{x}$ along $\sigma^*_\beta(\x_{i-1},\x_i)$ such that
$\rho(\mathbf{x}', \mathbf{x}) \leq \beta_n \leq \frac{\delta}{2}$.
Due to the $\sigma^*_\beta$ being $\frac{\delta}{2}$-robust, we are guaranteed that the motion plan $\tilde{\sigma}$ is collision free.

To summarize, as long as the required conditions are satisfied, \ros still finds a valid motion plan $\sigma$ that satisfies $\C(\sigma) \leq (1 + \varepsilon)\cdot\C(\sigma^*)$.

\end{document}